%% file: paper.tex
\documentclass[runningheads]{llncs}
\usepackage[T1]{fontenc}
\usepackage{graphicx}
\usepackage{booktabs}
\usepackage[misc]{ifsym}

\usepackage{mwe}
\usepackage{amsmath}
\usepackage{amsthm}
\usepackage{amssymb}
\usepackage{amsfonts} 
\usepackage{tikz}
\usepackage{dsfont}
\usepackage{algorithm}
\usepackage[noend]{algorithmic}
\usepackage{hyperref}       % hyperlinks
\usepackage{url}  
\usepackage{subcaption}
\usepackage{ulem}
\usetikzlibrary{decorations.pathreplacing}

\usepackage{changes}
\definechangesauthor[name={LZ}, color=blue]{LZ}

\newtheorem{assumption}{Assumption}
\begin{document}

\title{Beyond Stationarity in Time Series: Discovering Causal Structures and Latent Regimes via Markov Blankets}

\titlerunning{Discovering Causal Structures and Latent Regimes via Markov Blankets}
% If the full title of your paper is short enough to also fit in the running head, you can omit the abbreviated paper title here. You can check as follows: if you comment out the \titlerunning line, something will appear in the header of all odd-numbered pages of your PDF from page 3 onward. This something is either the full title (in which case all is well), or the error message "Title Suppressed Due to Excessive Length". If this error message appears, you're going to want to provide an abbreviated title within the \titlerunning command, because if you won't do it, Springer will do it for you.

%N.B.: Author information (both in the \author{} and \authorrunning{} command) should only be present in the Camera-Ready Version of your paper. The version that you initially submit for review, ought to be double-blind. So, when initially submitting your paper, use:
% \author{Author information scrubbed for double-blind reviewing}

\author{Lei Zan\inst{1} \and
Charles K. Assaad \inst{2} \and
Emilie Devijver \inst{1} \and % \orcidID{0000-1111-2222-3333}} 
Eric Gaussier \inst{1}}

% You may leave out the orcidID information, if you want to.
% Use \corr to indicate the corresponding author. Note the spacing around the \corr command. Only one author can be the corresponding author.

%N.B.: comment out the \authorrunning{} command for the double-blind version of your paper submitted for review. Later, if your paper is accepted, use the command for the Camera-Ready Version.

\authorrunning{L. Zan et al.}

\institute{Univ. Grenoble Alpes, CNRS, Grenoble INP, LIG, Grenoble, France \email{xdzanlei@gmail.com, 
\{emilie.devijver, eric.gaussier\}@univ-grenoble-alpes.fr}
\and
 Sorbonne Université, INSERM, Institut Pierre Louis d’Epidémiologie et de Santé Publique, F75012, Paris, France
\email{charles.assaad@inserm.fr}}

\maketitle              % typeset the header of the contribution

\begin{abstract}
This paper introduces Regime-aware Constraint-Based and Noise-Based causal discovery with Markov Blankets (RCBNB-MB), a novel causal discovery algorithm for time series that relaxes the common assumption of a single, time-consistent causal structure. Time series are typically observed at discrete time points and often exhibit regime changes that challenge the assumption of a static causal structure, a limitation in many real-world dynamic systems. To address this challenge, RCBNB-MB identifies latent causal regimes, defined as subsets of time points within which a stable causal structure holds. The algorithm follows an iterative strategy that segments the time series into regimes and discovers the causal graph within each regime. By leveraging the Markov blanket rather than direct parents, RCBNB-MB gains robustness to errors in causal discovery and preserves predictive information. 
We provide theoretical guarantees for RCBNB-MB’s ability to recover both regime transitions and causal graphs under reasonable assumptions. Furthermore, we validate its effectiveness through extensive experiments on simulated datasets with known ground truth and real-world IT monitoring data, where taking into account regime shifts is critical. 
Empirical results show that RCBNB-MB systematically outperforms baseline approaches in accurately detecting regime changes and their associated causal graphs, positioning it as a robust and versatile framework for non-stationary time series analysis.

\keywords{causal discovery  \and regime shift \and non-stationarity.}
\end{abstract}

\input{Introduction}
\input{Sota}

\input{Preliminaries}

\input{Methodology}
\input{Experiments}
\input{Conclusion}

% \section{Recent Advances from the Mariana Trench}

% Displayed equations are centered and set on a separate
% line.
% \begin{equation}
% x + y = z
% \end{equation}
% Please try to avoid rasterized images for line-art diagrams and
% schemas. Whenever possible, use vector graphics instead (see
% Fig.\@ \ref{fig1}).

\begin{credits}
% \subsubsection{\ackname} A bold run-in heading in small font size at the end of the paper is
% used for general acknowledgments, for example: This study was funded
% by X (grant number Y).

% \subsubsection{Generative AI Disclosure}
% The authors used LLM-based tools solely for rephrasing sentences to improve readability, and all scientific content remains the authors' own.

% \subsubsection{\discintname}
%  The authors have no competing interests to declare that are
% relevant to the content of this article.

% It is now necessary to declare any competing interests or to specifically
% state that the authors have no competing interests. Please place the
% statement with a bold run-in heading in small font size beneath the
% (optional) acknowledgments,
% for example: The authors have no competing interests to declare that are
% relevant to the content of this article. Or: Author A has received research
% grants from Company W. Author B has received a speaker honorarium from
% Company X and owns stock in Company Y. Author C is a member of committee Z.
\end{credits}
%
% ---- Bibliography ----
%
% BibTeX users should specify bibliography style 'splncs04'.
% References will then be sorted and formatted in the correct style.
%
% \bibliographystyle{splncs04}
% \bibliography{mybibliography}
%% Note that this preceding line implies that you store your BibTeX references in a file called 'mybibliography.bib'. If you instead store your references in a file with a different name, for instance 'references.bib', the preceding line should read '\bibliography{references}'. Whatever you do, DO NOT put the file name extension .bib inside the \bibliography command; this will trip up LaTeX compilers. 

\bibliographystyle{splncs04}
\bibliography{references}

\newpage
\appendix
\input{Appendix}

\end{document}

%% file: Introduction.tex
\section{Introduction}
\label{sec:intro}

Causal discovery for time series  is a fundamental problem in many domains, such as IT monitoring systems, climate science, economics, epidemiology, and neuroscience. Understanding causal relationships over time enables better prediction, diagnosis, and intervention in complex dynamic systems. However, most causal discovery methods assume stationarity, i.e., that causal dependencies remain constant. In practice, this assumption is often unrealistic: many systems undergo regime shifts, where causal influences change due to unobserved factors.

To address this, we propose a method that automatically segments time series into distinct regimes, within which stationarity holds. By jointly learning the segmentation and the causal structure in each regime, our approach provides a more flexible and robust framework for causal discovery in heterogeneous time series.
A key challenge in this setting is that regime assignment must be inferred from data. Standard methods typically rely on the causal parent set (PA) to group time points into regimes, but errors in causal graph estimation can lead to incorrect assignments. To mitigate this, we instead use the Markov blanket~\cite{pellet2008using} (MB), a set comprising a variable's parents, children, and spouses that renders the variable independent of all others when conditioned upon. As a theoretically optimal feature set for prediction and classification~\cite{tsamardinos2003uniquness}, the MB provides a more stable and informative representation for regime assignment.
This is particularly relevant in applications where regime changes are implicit and unknown. For instance, in IT monitoring data \cite{aïtbachir2023case}, system behavior differs between normal and anomalous states, but the transition times are often unknown. Similar shifts can arise from hidden external factors in neuroscience, epidemiology, and climate science. Our method learns these regimes directly from data while recovering their causal structures.

Our main contributions are:
\begin{itemize}
    \item Identifying a key challenge in heterogeneous time series: when instantaneous effects exist (e.g., due to low sampling rates), standard causal discovery methods struggle to correctly assign timestamps to the right regime.
\item Reframing the regime assignment problem as a prediction task, leveraging the Markov blanket to improve robustness against errors in graph estimation.
\item Introducing RCBNB-MB, a novel method that simultaneously identifies regimes and reconstructs regime-specific causal graphs, including instantaneous connections.
\item Demonstrating the effectiveness of our approach in uncovering regime-dependent causal structures through an empirical validation on synthetic and real-world IT monitoring data.
\end{itemize}

The paper is organized as follows. Section~\ref{sec:SOTA} reviews existing causal discovery methods for heterogeneous time series and their limitations. Section~\ref{sec:pre} introduces our causal model. Section~\ref{sec:theo} presents our proposed method, RCBNB-MB. Section~\ref{sec:exper} provides experimental results on synthetic and real-world data. Finally, Section~\ref{sec:conclu} concludes the paper.

%% file: Sota.tex
\section{Related work}
\label{sec:SOTA}
Causal discovery for time series has been extensively reviewed, with a focus on consistency over time \cite{assaad2022survey,10.1145/3705297}. In this paper, we focus on cases where consistency is violated, i.e., when causal relationships evolve over time or across regimes. We summarize the main characteristics in Table \ref{tab:methods_comparison}.

Classically, methods for causal discovery in non-stationary time series assume the availability of predefined regime or context information. \cite{huang2019causal} introduced a causal discovery method for heterogeneous time series using nonlinear state-space models with linear causal modeling. \cite{huang2020causal} developed the CD-NOD framework, which models changes in causal mechanisms across regimes as being confounded by an unobserved pseudo-confounder. This confounding is addressed using a surrogate variable that captures these changes and reveals additional causal directions through the Independent Causal Mechanisms principle. However, CD-NOD does not preserve temporal information, providing only a summarized causal graph. CD-NOTS \cite{2025_sadeghi} assumes causal stationarity in the conditional distribution, resulting in a common causal graph for the entire time series. Extensions by \cite{ferdous2023b} recover temporal relationships, while \cite{yang2025} reconstructs the causal structure using copula entropy, assuming the surrogate variable influences other variables at specific lags—a strong assumption that may not hold in many real-world scenarios.
Similarly, J(oint)-PCMCI$^+$ \cite{gunther2023causal} incorporates both observed and unobserved context variables, akin to surrogate variables, and preserves temporal relationships across regimes. JIT-LiNGAM \cite{fujiwara2023causal} learns local, linear approximations for nonlinear, heterogeneous time series but does not account for temporal lags.

\begin{table}[t]
\centering
\caption{Comparison of causal discovery methods for non-stationary time series. PA and MB denote the parent set and Markov blanket, respectively.}
\label{tab:methods_comparison}
\resizebox{\textwidth}{!}{%
\begin{tabular}{|l|c|c|c|c|c|}
\hline
\textbf{Method} & \textbf{Regimes} &  \textbf{Temporal} & \textbf{Contemp.} & \textbf{Fixed} &  \textbf{Prediction} \\ 
& \textbf{} & \textbf{Lags} & \textbf{Effects} & \textbf{Causal Order}  & \\\hline
CD-NOD \cite{huang2020causal} & Segmentation  & \checkmark & \checkmark & \texttimes &  \texttimes\\ \hline
% CD-NOTS \cite{2025_sadeghi} &  \texttimes & \checkmark & \checkmark & \texttimes &  \\ \hline
% (e)CDANs \cite{ferdous2023b,ferdous2023a} & \texttimes  & \checkmark & \checkmark & \texttimes &  \\ \hline
% CE-CDN \cite{yang2025} & \texttimes  & \checkmark & \texttimes & \texttimes  & \\ \hline
J(oint)-PCMCI$^+$ \cite{gunther2023causal} & Input  & \checkmark & \checkmark & \texttimes &  \texttimes\\ \hline
% JIT-LiNGAM \cite{fujiwara2023causal} & \texttimes & \texttimes & \texttimes & \texttimes & \texttimes & \checkmark & \texttimes & \\ \hline
LoSST \cite{kummerfeld2013tracking} & Mahalanobis distance  & \texttimes & \texttimes & \texttimes  &\texttimes \\ \hline
SPACETIME \cite{Mameche_2025} & Learnt & \checkmark & \checkmark & \checkmark  & PA \\ \hline
% PCMCI$_\Omega$ \cite{gao2024causal} & \texttimes & \texttimes & \checkmark & \texttimes & \texttimes & \checkmark & \texttimes&  \\ \hline
SDCI \cite{rodas2022causal} & Learnt  & \checkmark &  \texttimes & \texttimes  & PA \\ \hline
RPCMCI \cite{saggioro2020reconstructing} & Learnt  & \checkmark & \texttimes & \texttimes  & PA \\ \hline
CASTOR \cite{2025_RahmaniF} & Learnt  & \checkmark & \checkmark & \texttimes  & PA \\ \hline
\textbf{RCBNB-MB} & Learnt  & \checkmark & \checkmark & \texttimes  & MB \\ \hline
\end{tabular}%
}
\end{table}

More useful for real-world datasets is the ability to infer regimes and recover causal structures without assuming predefined context variables. Some methods operate directly on the time series. \cite{kummerfeld2013tracking} introduced a real-time method for updating causal graphs by identifying change points using the Mahalanobis distance. However, this approach assumes i.i.d. data, which is rarely true for time series. \cite{Mameche_2025} introduced a method to detect regimes, contexts, and causal structures in multiple multivariate time series, but it assumes a fixed causal order and skeleton across contexts—a strong assumption that may not hold if causal relationships change direction between regimes. \cite{gao2024causal} addressed a special case of heterogeneity in time series, assuming that different causal mechanisms occur sequentially and periodically over time. This represents a strong assumption that may not hold in many real-world scenarios. Another viewpoint is to construct regimes to improve forecasting.  They are all (to our knowledge)  based on forecasting with causal parents. 
\cite{rodas2022causal,2022_cai} introduced a strategy to identify both causal relationships and states in state-dependent stationary time series. However, they do not account for instantaneous connections between observed time series. Similarly, \cite{saggioro2020reconstructing} proposed RPCMCI, which detects regimes in heterogeneous time series and reconstructs causal graphs for each regime. However, this method focuses solely on time-lagged relationships, neglecting contemporaneous causal effects. \cite{2025_RahmaniF} proposed CASTOR, which learns a DAG for each regime while determining the number of regimes and their sequential arrangement, based on an EM algorithm.
If all those methods discover a causal graph per regime under different assumptions, they are discriminating regimes by using a forecasting model within the time series based on the causal parents. Our method, on the contrary, relies on the Markov blanket.

%% file: Preliminaries.tex
\section{Causal modeling for heterogeneous time series}
\label{sec:pre}

\begin{figure}[t]
	\begin{minipage}[t]{1\linewidth}
    \scalebox{0.9}{
		\centering
		\begin{tikzpicture}[{black, circle, draw, inner sep=0}]
			\tikzset{nodes={draw,rounded corners},minimum height=0.7cm,minimum width=0.7cm, font=\scriptsize}
			\tikzset{latent/.append style={fill=gray!30}}

                %%%%%%%%%%%%%%%%%%%%%%%%%%%%%%%%%%%%%%%%%%%%%%%%%%
                % One regime  
                %%%%%%%%%%%%%%%%%%%%%%%%%%%%%%%%%%%%%%%%%%%%%%%%%%
			
			\node (X-2_1) at (0,-1) {$X_{t-9}$} ;
			\node (X-1_1) at (1.5,-1) {$X_{t-8}$};
			\node (X_1) at (3,-1) {$X_{t-7}$};
			\node (Z-2_1) at (0,0) {$Y_{t-9}$} ;
			\node (Z-1_1) at (1.5,0) {$Y_{t-8}$};
			\node (Z_1) at (3,0) {$Y_{t-7}$};
			\node (Y-2_1) at (0,1) {$Z_{t-9}$} ;
			\node (Y-1_1) at (1.5,1) {$Z_{t-8}$};
			\node (Y_1) at (3,1) {$Z_{t-7}$};
			
			\draw[->,>=latex] (X-2_1) -- (X-1_1);
			\draw[->,>=latex] (X-1_1) -- (X_1);
			\draw[->,>=latex] (Z-2_1) -- (Z-1_1);
			\draw[->,>=latex] (Z-1_1) -- (Z_1);
			\draw[->,>=latex] (Y-2_1) -- (Y-1_1);
			\draw[->,>=latex] (Y-1_1) -- (Y_1);
			
			\draw[->,>=latex] (Z-2_1) -- (X_1);
			\draw[->,>=latex] (Z_1) -- (Y_1);
			\draw[->,>=latex] (Z-1_1) -- (Y-1_1);
			\draw[->,>=latex] (Z-2_1) -- (Y-2_1);
			
			\node (X1_1) at (4.5,-1) {$X_{t-6}$} ;
			\node (Z1_1) at (4.5,0) {$Y_{t-6}$} ;
			\node (Y1_1) at (4.5,1) {$Z_{t-6}$} ;

			\draw[->,>=latex] (X_1) -- (X1_1);
			\draw[->,>=latex] (Z_1) -- (Z1_1);
			\draw[->,>=latex] (Y_1) -- (Y1_1);
			
			\draw[->,>=latex] (Z1_1) -- (Y1_1);
			\draw[->,>=latex] (Z-1_1) -- (X1_1);

			\coordinate[left of=X-2_1] (d1_1);
			\draw [dashed,>=latex] (X-2_1) to[left] (d1_1);
			\coordinate[left of=Z-2_1] (d1_1);
			\draw [dashed,>=latex] (Z-2_1) to[left] (d1_1);
			\coordinate[left of=Y-2_1] (d1_1);
			\draw [dashed,>=latex] (Y-2_1) to[left] (d1_1);
			
			\coordinate[right of=X1_1] (d1_1);
			\draw [dashed,>=latex] (X1_1) to[right] (d1_1);
			\coordinate[right of=Z1_1] (d1_1);
			\draw [dashed,>=latex] (Z1_1) to[right] (d1_1);
			\coordinate[right of=Y1_1] (d1_1);
			\draw [dashed,>=latex] (Y1_1) to[right] (d1_1);

                %\draw[decorate,decoration={brace,amplitude=10pt, mirror}] (-1,-1.5) -- (5.5,-1.5);
                %\node[draw=none, font=\scriptsize] at (2.25,-2.1) {$\mathcal{G}^1_f$};

                %%%%%%%%%%%%%%%%%%%%%%%%%%%%%%%%%%%%%
                % regime index 
                %%%%%%%%%%%%%%%%%%%%%%%%%%%%%%%%%%%%%
                \node[latent, draw=none, rectangle, minimum width=0.3cm] (C-2_1) at (0,2.4) {$C_{t-9}{ = 1}$};
			\node[latent, draw=none, rectangle, minimum width=0.3cm] (C-1_1) at (1.5,2.4) {$C_{t-8}{ = 1}$};
			\node[latent, draw=none, rectangle, minimum width=0.3cm] (C_1) at (3,2.4) {$C_{t-7}{ = 1}$};
                \node[latent, draw=none, rectangle, minimum width=0.3cm] (C1_1) at (4.5,2.4) {$C_{t-6}{ = 1}$};

                \draw [dashed,>=latex] (-1,1.7) to[right] (12,1.7);

         %       \coordinate[left of=C-2_1] (d1_2);
			%\draw [dashed,>=latex] (C-2_1) to[left] (d1_2);
			
			\coordinate[right of=C1_1] (d1_2);
			\draw [dashed,>=latex] (C1_1) to[right] (d1_2);

                % regime 1
                \path[->]  (C-2_1) edge [bend left=25] (X-2_1);
                \path[->]  (C-2_1) edge [bend left=25] (Y-2_1);
                \path[->]  (C-2_1) edge [bend left=25] (Z-2_1);
                \path[->]  (C-1_1) edge [bend left=25] (X-1_1);
                \path[->]  (C-1_1) edge [bend left=25] (Y-1_1);
                \path[->]  (C-1_1) edge [bend left=25] (Z-1_1);
                \path[->]  (C_1) edge [bend left=25] (X_1);
                \path[->]  (C_1) edge [bend left=25] (Y_1);
                \path[->]  (C_1) edge [bend left=25] (Z_1);
                \path[->]  (C1_1) edge [bend left=25] (X1_1);
                \path[->]  (C1_1) edge [bend left=25] (Y1_1);
                \path[->]  (C1_1) edge [bend left=25] (Z1_1);

            %%%%%%%%%%%%%%%%%%%%%%%%%%%%%%%%%%%%%%%%%%%%%%%%%%
            % another regime  
            %%%%%%%%%%%%%%%%%%%%%%%%%%%%%%%%%%%%%%%%%%%%%%%%%%
            
                \node (X-2_2) at (6.5,-1) {$X_{t-2}$} ;
			\node (X-1_2) at (8,-1) {$X_{t-1}$};
			\node (X_2) at (9.5,-1) {$X_{t}$};
			\node (Z-2_2) at (6.5,0) {$Y_{t-2}$} ;
			\node (Z-1_2) at (8,0) {$Y_{t-1}$};
			\node (Z_2) at (9.5,0) {$Y_{t}$};
			\node (Y-2_2) at (6.5,1) {$Z_{t-2}$} ;
			\node (Y-1_2) at (8,1) {$Z_{t-1}$};
			\node (Y_2) at (9.5,1) {$Z_{t}$};
			
			\draw[->,>=latex] (X-2_2) -- (X-1_2);
			\draw[->,>=latex] (X-1_2) -- (X_2);
			\draw[->,>=latex] (Z-2_2) -- (Z-1_2);
			\draw[->,>=latex] (Z-1_2) -- (Z_2);
			\draw[->,>=latex] (Y-2_2) -- (Y-1_2);
			\draw[->,>=latex] (Y-1_2) -- (Y_2);
			
			\draw[->,>=latex] (Z-2_2) -- (Y_2);
			\draw[->,>=latex] (X_2) -- (Z_2);
			\draw[->,>=latex] (X-1_2) -- (Z-1_2);
			\draw[->,>=latex] (X-2_2) -- (Z-2_2);
			
			\node (X1_2) at (11,-1) {$X_{t+1}$} ;
			\node (Z1_2) at (11,0) {$Y_{t+1}$} ;
			\node (Y1_2) at (11,1) {$Z_{t+1}$} ;

			\draw[->,>=latex] (X_2) -- (X1_2);
			\draw[->,>=latex] (Z_2) -- (Z1_2);
			\draw[->,>=latex] (Y_2) -- (Y1_2);
			
			\draw[->,>=latex] (X1_2) -- (Z1_2);
			\draw[->,>=latex] (Z-1_2) -- (Y1_2);

			\coordinate[left of=X-2_2] (d1_2);
			\draw [dashed,>=latex] (X-2_2) to[left] (d1_2);
			\coordinate[left of=Z-2_2] (d1_2);
			\draw [dashed,>=latex] (Z-2_2) to[left] (d1_2);
			\coordinate[left of=Y-2_2] (d1_2);
			\draw [dashed,>=latex] (Y-2_2) to[left] (d1_2);
			
			\coordinate[right of=X1_2] (d1_2);
			\draw [dashed,>=latex] (X1_2) to[right] (d1_2);
			\coordinate[right of=Z1_2] (d1_2);
			\draw [dashed,>=latex] (Z1_2) to[right] (d1_2);
			\coordinate[right of=Y1_2] (d1_2);
			\draw [dashed,>=latex] (Y1_2) to[right] (d1_2);

                %\draw[decorate,decoration={brace,amplitude=10pt, mirror}] (5.5,-1.5) -- (12,-1.5);
                % \node[inner sep=0pt,outer sep=0pt,font=\scriptsize]  {Regime 1};
               % \node[draw=none, font=\scriptsize] at (8.75,-2.1) {$\mathcal{G}^2_f$};

                 %%%%%%%%%%%%%%%%%%%%%%%%%%%%%%%%%%%%%
                % regime index 
                %%%%%%%%%%%%%%%%%%%%%%%%%%%%%%%%%%%%%

                \node[fill=cyan!30,draw=none, rectangle, minimum width=0.3cm] (C-2_1) at (6.5,2.4) {$C_{t-2}=2$};
			\node[fill=cyan!30,draw=none, rectangle, minimum width=0.3cm] (C-1_1) at (8,2.4) {$C_{t-1}=2$};
			\node[fill=cyan!30,draw=none, rectangle, minimum width=0.3cm] (C_1) at (9.5,2.4) {$C_{t}=2$};
                \node[fill=cyan!30, draw=none, rectangle, minimum width=0.3cm] (C1_1) at (11,2.4) {$C_{t+1}=2$};

               \coordinate[left of=C-2_1] (d1_2);
			\draw [dashed,>=latex] (C-2_1) to[left] (d1_2);
			
	%		\coordinate[right of=C1_1] (d1_2);
	%		\draw [dashed,>=latex] (C1_1) to[right] (d1_2);
   
                 % regime 1
                \path[->]  (C-2_1) edge [bend left=20] (X-2_2);
                \path[->]  (C-2_1) edge [bend left=20] (Y-2_2);
                \path[->]  (C-2_1) edge [bend left=20] (Z-2_2);
                \path[->]  (C-1_1) edge [bend left=20] (X-1_2);
                \path[->]  (C-1_1) edge [bend left=20] (Y-1_2);
                \path[->]  (C-1_1) edge [bend left=20] (Z-1_2);
                \path[->]  (C_1) edge [bend left=15] (X_2);
                \path[->]  (C_1) edge [bend left=15] (Y_2);
                \path[->]  (C_1) edge [bend left=15] (Z_2);
                \path[->]  (C1_1) edge [bend left=20] (X1_2);
                \path[->]  (C1_1) edge [bend left=20] (Y1_2);
                \path[->]  (C1_1) edge [bend left=20] (Z1_2);

		\end{tikzpicture}
            }
            % \captionsetup{justification=centering}
             \caption*{(a) Full time causal graph $\mathcal{G}_f$ from two distinct latent regimes.}
	\end{minipage}	
      %  \vspace{1.5cm}
	\begin{minipage}[t]{\linewidth}
		\centering
		\scalebox{0.9}{
		\begin{tikzpicture}[{black, circle, draw, inner sep=0}]
			\tikzset{nodes={draw,rounded corners},minimum height=0.7cm,minimum width=0.7cm, font=\scriptsize}
			\tikzset{latent/.append style={fill=gray!30}}

                %%%%%%%%%%%%%%%%%%%%%%%%%%%%%%%%%%%
                % one regime 
                %%%%%%%%%%%%%%%%%%%%%%%%%%%%%%%%%%%
			
			\node (X2_1) at (0,-1) {$X_{t-2}$} ;
			\node (X1_1) at (1.5,-1) {$X_{t-1}$};
			\node (X_1) at (3,-1) {$X_{t}$};
			\node (Z2_1) at (0,0) {$Y_{t-2}$} ;
			\node (Z1_1) at (1.5,0) {$Y_{t-1}$};
			\node (Z_1) at (3,0) {$Y_{t}$};
			\node (Y2_1) at (0,1) {$Z_{t-2}$} ;
			\node (Y1_1) at (1.5,1) {$Z_{t-1}$};
			\node (Y_1) at (3,1) {$Z_{t}$};
			\draw[->,>=latex] (X2_1) -- (X1_1);
			\draw[->,>=latex] (X1_1) -- (X_1);
			\draw[->,>=latex] (Z2_1) -- (Z1_1);
			\draw[->,>=latex] (Z1_1) -- (Z_1);
			\draw[->,>=latex] (Y2_1) -- (Y1_1);
			\draw[->,>=latex] (Y1_1) -- (Y_1);
			
			\draw[->,>=latex] (Z2_1) -- (X_1);
			\draw[->,>=latex] (Z_1) -- (Y_1);
			\draw[->,>=latex] (Z1_1) -- (Y1_1);
			\draw[->,>=latex] (Z2_1) -- (Y2_1);

            \node[latent, draw=none, rectangle, font=\scriptsize] at (-1,0) {$\mathcal{G}^1_w$:};

                % \draw[decorate,decoration={brace,amplitude=10pt, mirror}] (0,-1.5) -- (3,-1.5);
                % % \node[inner sep=0pt,outer sep=0pt,font=\scriptsize]  {Regime 1};
                % \node[draw=none, font=\scriptsize] at (1.5,-2.1) {$\mathcal{G}^1_w$};
                
                %%%%%%%%%%%%%%%%%%%%%%%%%%%%%%%%%%%%%
                % regime index 
                %%%%%%%%%%%%%%%%%%%%%%%%%%%%%%%%%%%%%
   %              \node[draw, rectangle, minimum width=1.2cm] (C-2_2) at (0,2.4) {$C_{t-2}=1$};
			% \node[draw, rectangle, minimum width=1.2cm] (C-1_2) at (1.5,2.4) {$C_{t-1}=1$};
			% \node[draw, rectangle, minimum width=1.2cm] (C_2) at (3,2.4) {$C_{t}=1$};

   %              \draw [dashed,>=latex] (-1,1.7) to[right] (10,1.7);

                % regime 1
                % \path[->]  (C-2_2) edge [bend left=25] (X2_1);
                % \path[->]  (C-2_2) edge [bend left=25] (Y2_1);
                % \path[->]  (C-2_2) edge [bend left=25] (Z2_1);
                % \path[->]  (C-1_2) edge [bend left=25] (X1_1);
                % \path[->]  (C-1_2) edge [bend left=25] (Y1_1);
                % \path[->]  (C-1_2) edge [bend left=25] (Z1_1);
                % \path[->]  (C_2) edge [bend left=17] (X_1);
                % \path[->]  (C_2) edge [bend left=17] (Y_1);
                % \path[->]  (C_2) edge [bend left=17] (Z_1);

                %%%%%%%%%%%%%%%%%%%%%%%%%%%%%%%%%%%
                % another regime 
                %%%%%%%%%%%%%%%%%%%%%%%%%%%%%%%%%%%
                
                \node (X2_2) at (6,-1) {$X_{t-2}$} ;
			\node (X1_2) at (7.5,-1) {$X_{t-1}$};
			\node (X_2) at (9,-1) {$X_{t}$};
			\node (Z2_2) at (6,0) {$Y_{t-2}$} ;
			\node (Z1_2) at (7.5,0) {$Y_{t-1}$};
			\node (Z_2) at (9,0) {$Y_{t}$};
			\node (Y2_2) at (6,1) {$Z_{t-2}$} ;
			\node (Y1_2) at (7.5,1) {$Z_{t-1}$};
			\node (Y_2) at (9,1) {$Z_{t}$};
			\draw[->,>=latex] (X2_2) -- (X1_2);
			\draw[->,>=latex] (X1_2) -- (X_2);
			\draw[->,>=latex] (Z2_2) -- (Z1_2);
			\draw[->,>=latex] (Z1_2) -- (Z_2);
			\draw[->,>=latex] (Y2_2) -- (Y1_2);
			\draw[->,>=latex] (Y1_2) -- (Y_2);
			
			\draw[->,>=latex] (Z2_2) -- (Y_2);
			\draw[->,>=latex] (X_2) -- (Z_2);
			\draw[->,>=latex] (X1_2) -- (Z1_2);
			\draw[->,>=latex] (X2_2) -- (Z2_2);
   
                %\draw[decorate,decoration={brace,amplitude=10pt, mirror}] (6,-1.5) -- (9,-1.5);
                % \node[inner sep=0pt,outer sep=0pt,font=\scriptsize]  {Regime 1};
                \node[fill=cyan!30, draw=none, rectangle, font=\scriptsize] at (5,0) {$\mathcal{G}^2_w$:};

                %%%%%%%%%%%%%%%%%%%%%%%%%%%%%%%%%%%%%
                % regime index 
                %%%%%%%%%%%%%%%%%%%%%%%%%%%%%%%%%%%%%
   %              \node[draw, rectangle, minimum width=1.2cm] (C-2_2) at (6,2.4) {$C_{t-2}=2$};
			% \node[draw, rectangle, minimum width=1.2cm] (C-1_2) at (7.5,2.4) {$C_{t-1}=2$};
			% \node[draw, rectangle, minimum width=1.2cm] (C_2) at (9,2.4) {$C_{t}=2$};
 
                % % regime 2
                % \path[->]  (C-2_2) edge [bend left=25] (X2_2);
                % \path[->]  (C-2_2) edge [bend left=25] (Y2_2);
                % \path[->]  (C-2_2) edge [bend left=25] (Z2_2);
                % \path[->]  (C-1_2) edge [bend left=25] (X1_2);
                % \path[->]  (C-1_2) edge [bend left=25] (Y1_2);
                % \path[->]  (C-1_2) edge [bend left=25] (Z1_2);
                % \path[->]  (C_2) edge [bend left=25] (X_2);
                % \path[->]  (C_2) edge [bend left=25] (Y_2);
                % \path[->]  (C_2) edge [bend left=25] (Z_2);
		\end{tikzpicture}
        }
	   \caption*{(b) Window causal graphs $\mathcal{G}_w^1$ (left) and  $\mathcal{G}_w^2$ (right) from two distinct regimes.}
	\end{minipage}
	\caption{Two graphic causal relation representations for time series with two distinct regimes: (a) full time causal graphs from two distinct regimes (b) window causal graphs from two distinct regimes. The variable $C_t$ is latent and has to be inferred. }
	\label{fig:ftcg_wcg}
\end{figure}

Consider a dynamic system represented by a multivariate time series of dimension $d$, observed over $\mathbf{T}$, and denoted as $\{\mathbf{X}_{t}\}_{t \in  \mathbf{T}} = \{X_{t}^{j}\}_{t \in  \mathbf{T}, j \in \{1, \dots, d\}}$. These data exhibit complex causal dependencies, naturally modeled using a Directed Acyclic Graph (DAG), called the full-time causal graph, denoted as $\mathcal{G}_{f}=(\mathbf{V}_{f}, \mathbf{E}_{f})$. Here, $\mathbf{V}_{f}$ represents vertices corresponding to time-indexed variables, and $\mathbf{E}_{f}$ captures directed causal relationships.
A key aspect of our framework is the introduction of hidden context variables $C_t$, where $C_t \in \{1, \dots, r\}$ indicates the regime to which each time point $t$ belongs. The number of distinct regimes is denoted as $r$.
We do not assume regimes are known a priori. Instead, $C_t$ acts as an unobserved confounder, directly influencing all observed variables at time $t$ and defining the regime-specific causal structure.
%\textcolor{red}{position wrt SOTA} 
%We assume that $C_t$ directly influences all observed variables at time $t$, meaning that the system's causal structure depends on the underlying regime. To formalize this, we introduce \textbf{semi-pseudo causal sufficiency}, inspired by \textbf{pseudo causal sufficiency} \cite{huang2020causal}.
\begin{assumption}[Semi-pseudo causal sufficiency]
\label{ap:pseudo_causal_sufficiency}
All hidden confounding between observed variables is captured by $(C_t)_{t\in \mathbb{T}}$. 
%There is only one unobserved confounder, which correspond to the regime index. %Under semi-pseudo causal sufficiency, potential unobserved confounders can be expressed as functions of the regime index. \textcolor{red}{to be discussed}
\end{assumption}
%This assumption helps manage hidden confounding effects while maintaining causal interpretability.
To infer the causal graph, we assume \textbf{consistency throughout time} \cite{assaad2022survey} (also known as \textbf{causal stationarity} \cite{runge2018causal}) within each regime: causal relationships remain invariant over time, allowing us to sidestep the intractable full causal graph $\mathcal{G}_{f}$ and instead operate on window causal graphs $\mathcal{G}_{w}^k = (\mathbf{V}_{w}, \mathbf{E}_{w}^k)$ for $k\in \{1,\ldots,r\}$.  While this assumption breaks down at regime boundaries (where structural shifts occur) it provides a principled way to balance model complexity and interpretability.
%This assumption ensures stable causal relationships over time within a given regime, enabling a more compact representation. Of course, stability is broken at regime boundaries. 
%Consistency throughout time allows one to avoid working with $\mathcal{G}_{f}$ and focus on the \textbf{window causal graph} for each regime, denoted as $\mathcal{G}_{w}^k = (\mathbf{V}_{w}, \mathbf{E}_{w}^k)$ for $k\in \{1,\ldots,r\}$. 
These graphs capture causal dependencies within a finite window of length $\tau_{\max}$ which corresponds to the maximum lag between direct causes and effects. %, summarizing local causal mechanisms.
This leads to the following model.
\begin{definition}[Regime-based multivariate time series]
    \label{def:hetero_time_series}
    A multivariate time series, $\{\mathbf{X}_{t}\}_{t \in  \mathbf{T}}$, is called a \emph{regime-based multivariate time series} if it can be partitioned into $r > 1$ disjoint regimes encoded by $C_t \in \{1,\ldots, r\}$ for $t \in  \mathbf{T}$, with a specific window causal graph $\mathcal{G}_w^{k}$ in regime $k$ s.t. $\forall k_1, k_2 \in \{1,\dots, r\}$, $k_1 \neq k_2 \Rightarrow \mathcal{G}_w^{k_1} \neq \mathcal{G}_w^{k_2}$. 
    % s. t. $\forall k_1, k_2 \in \{1,\dots, r\}, \mathcal{G}_w^{k_1} \neq \mathcal{G}_w^{k_2}$.
\end{definition}
The structural causal model (SCM)~\cite{pearl2009causality} for the $j^{th}$ variable in a heterogeneous multivariate time series is given by:
\begin{equation}\label{DSCM}
   X_{t}^{j} = \sum_{k =1}^{r}\mathds{1}_{C_{t}=k} \times g^{j,C_{t}}(\mathbf{Pa}(X_{t}^{j};\mathcal{G}^{C_{t}}_w), \epsilon_{t}^{j}) \text{ for $t \in \mathbf{T}$}.
\end{equation}
Here, $\mathds{1}_{C_{t}=k}$ is an indicator function that equals $1$ if and only if $C_{t}=k$. The response function $g^{j,C_{t}}(\cdot)$ is deterministic within each regime and depends only on $C_{t}$. $\mathbf{Pa}(X_{t}^{j};\mathcal{G}^{C_{t}}_w)$ represents the causal parents of $X_{t}^{j}$ in $\mathcal{G}^{C_{t}}_w$, including lagged and instantaneous variables. The noise terms $(\epsilon_{t}^{j})_{t \in  \mathbf{T}, 1\leq j \leq d}$ are assumed to be independent. Note that we allow for instantaneous causal relations in the window causal graph of each regime, which is often forgotten in the literature (except in methods such as CASTOR~\cite{2025_RahmaniF}). A regime does not necessarily correspond to a single contiguous time segment and can span multiple non-adjacent intervals.  %\textcolor{orange}{each of length greater than $2\tau_{\max}$ (accounting for dependencies in both past and future time steps)}. 
Figure \ref{fig:ftcg_wcg}(a) provides an example of a full-time causal graph for a multivariate time series with two distinct regimes, while Figure \ref{fig:ftcg_wcg}(b) depicts the corresponding window causal graphs $\mathcal{G}^1_{w}$ and $\mathcal{G}^2_{w}$.

%To align the causal graph with the observational joint distribution of time series for each regime, we assume the \textbf{Causal Markov Condition}~\cite{spirtes2001causation,Pearl_2000} and \textbf{Faithfulness}~\cite{spirtes2001causation}. When using constraint-based methods, we can typically identify causal graphs up to their Markov equivalence class (MEC), which includes all DAGs encoding the same conditional independence structure.

%% file: Methodology.tex
\section{Causal discovery from heterogeneous  time series}
\label{sec:theo}

\begin{figure}
    \includegraphics[width=\textwidth, trim = 0.75cm 0 0.75cm 0, clip=TRUE]{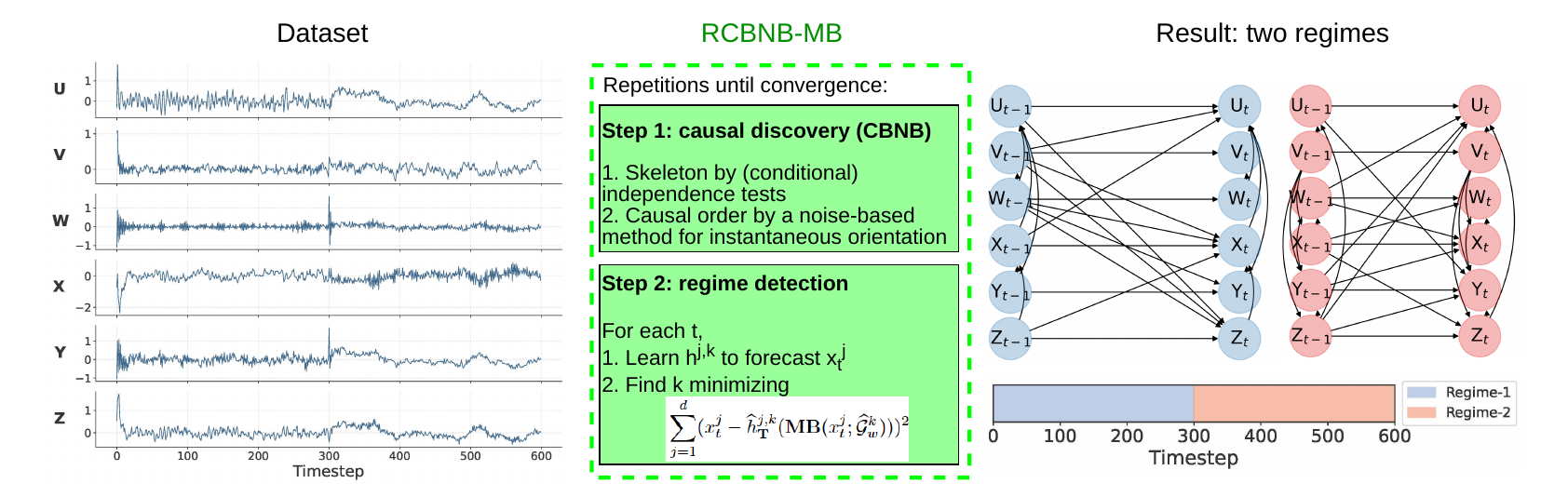}
    \caption{\textbf{Illustration of our method.} Left: an example of a dataset with 6 time series, which violates the causal stationarity assumption. Middle: flowchart of our iterative method through the two steps: causal discovery and regime detection. Right: results of our method on this dataset:  two regimes are considered, and we display the two causal graphs and the regime assignment.}
    \label{fig:1}
\end{figure}
The goal of causal discovery from heterogeneous, observational multivariate time series is twofold: 
\begin{itemize}
    \item[1.] Reconstruct the causal graph $(\mathcal{G}^{k}_w)$ within each regime $k$,
    \item[2.] Assign each timestamp to the correct regime.
\end{itemize}

We propose to address those two problems by an alternating approach, denoted as RCBNB-MB (Regime-aware Constraint-Based and Noise-Based causal discovery with Markov Blankets), which %, starting from a given assignment of time instants to regimes, 
alternates between two steps by first constructing the window causal graph of each regime from given assignments of time instants to regimes, and second by re-assigning time instants to regimes according to the new window causal graphs. We detail these two steps below, and the full procedure is illustrated in Figure \ref{fig:1}. In the following, we assume that the number of distinct regimes $r$ is known. 

\subsection{Regime-Aware Window Causal Graph Discovery}
RCBNB-MB begins by, given the regime index, discovering a causal graph, which reduces to time-series causal discovery within each regime. While any causal discovery algorithm could be integrated, we adopt CBNB~\cite{bystrova2024hybrids}, a {hybrid} method that prunes edges with a constraint-based approach and orients them with a noise-based one, thereby retaining the strengths of both families: the graph is fully oriented and the orientation search is restricted to the discovered skeleton, which improves its efficiency.
More specifically, given the regime index for each time point, CBNB recovers the window causal graph $\mathcal{G}_w$ in two  steps:
\begin{enumerate}
    \item a constraint-based step, which infers the skeleton of the window causal graph using (conditional) independence tests and orients lagged edges using temporal ordering,
    \item a noise-based step that orients the remaining instantaneous edges: within each group of variables linked by undirected edges, a restricted noise-based algorithm estimates a causal order among the instantaneous variables, including the lagged variables as covariates in the regressions to control for confounding.
\end{enumerate}

CBNB’s advantages are threefold: it requires only adjacency faithfulness ~\cite{ramsey2006adjacency}  (not full faithfulness), recovers the true causal graph (not just its Markov class), and outperforms pure constraint/noise-based methods in small-sample settings  \cite{malinsky2018causal,aïtbachir2023case}. CBNB's assumptions are formalized below.

\begin{assumption}[Adjacency Faithfulness]
% \begin{restatable}{assumption}{assumptionone}
\label{ap:adjacency_faithfulness}
   Let  $\mathcal{G}^{k}_w =(\textbf{V}_{w}, \textbf{E}^{k}_w)$ be the causal graph within any regime $k$. If two nodes $X$ and $Y$ in $\textbf{V}_{w}$ are adjacent in $\mathcal{G}^{k}_w$, then they are dependent conditionally on any subset of $\textbf{V}_{w} \backslash \{X, Y \}$. 
% \end{restatable}
\end{assumption}

\begin{assumption}[Identifiable functional model]
\label{ap:identifiable_model}
Given a heterogeneous multivariate time series satisfying the SCM in Equation~\eqref{DSCM}, each function $g^{j,k}(\cdot)$ in regime $k \in \{1,\ldots, r\}$ for a component $j \in \{1,\ldots, d\}$ belongs to an identifiable functional model class, as defined in \cite{peters2011identifiability}.
\end{assumption}

% As shown in \cite[Theorem 2]{bystrova2024hybrids}, given the regime index of each timepoint, under \textbf{causal sufficiency}, \textbf{causal Markov condition}, \textbf{adjacency faithfulness}, \textbf{consistency throughout time}, and Assumption~\ref{ap:identifiable_model}, CBNB correctly recovers the window causal graph, provided perfect conditional independence information.

%\textcolor{orange}{JE NE SUIS PAS CONVAINCU DE LA FORMULATION QUI SUIT It follows directly from \cite[Theorem 2]{bystrova2024hybrids} that given the \emph{correct} regime index of each timepoint, under Assumptions~\ref{ap:pseudo_causal_sufficiency}, \ref{ap:adjacency_faithfulness},  Assumption~\ref{ap:identifiable_model}, and \textbf{consistency throughout time}, CBNB correctly recovers the causal graph $\mathcal{G}^k_w$ for each regime $k$, provided perfect conditional independence information  and if different timepoints of each regime form a consecutive sequence. Theoretically, the algorithm is designed to operate on a consecutive sequence of time points, ensuring that causal relationships are correctly captured without temporal gaps. However, in practice, we also apply it to sequences with gaps, acknowledging that while the underlying assumptions may not strictly hold in such cases, the algorithm can still provide meaningful insights.} 
By \cite[Theorem 2]{bystrova2024hybrids},  CBNB correctly recovers the causal graph $\mathcal{G}^k_w$ for each regime $k$ if regime indexes are correct,  Assumptions~\ref{ap:pseudo_causal_sufficiency}, \ref{ap:adjacency_faithfulness},  Assumption~\ref{ap:identifiable_model}, and {consistency throughout time} hold, and conditional independence tests are perfect. While CBNB assumes consecutive time points, we extend its use to gapped regimes (Section~\ref{subsec:real_data}).

\subsection{Assigning Time Points to Regimes Using Markov-Blanket-based Prediction}

Assuming the window causal graph for each regime is known, we assign each time point to the regime that minimizes the prediction error for all variables in  $\mathbf{X}_{t} = \{\mathbf{X}_{t-\tau_{\max}}, \dots, \mathbf{X}_{t+\tau_{\max}}\}$. % the set of all variables $X_{t'}^{j}, \, 1 \le j \le d$ such that $t-\tau_{\max} \le t' \le t+\tau_{\max}$. 
% where $\tau_{\max}$ defines the maximum lag in the window causal graph. 
%For any variable $X_t^j$, the optimal predictor $h_*^{j,k}$ depends only on its Markov Blanket $\mathbf{MB}(X_{t}^{j};\mathcal{G}_w^{C_t}$ \cite{tsamardinos2003uniquness}. 
%The set of its values will be denoted by $\mathbf{x}_{t}$. By definition, $\mathbf{X}_{t}$ is guaranteed to contain the Markov blankets of all variables $X^j_t$. %$\mathbf{X}_{-t}^{-j}$, respectively $\mathbf{x}_{-t}^{-j}$, will denote $\mathbf{X}_{t} \setminus X_{t}^{j}$, respectively $\mathbf{x}_{t} \setminus x_{t}^{j}$. The Markov blanket of $X_{t}^{j}$ in the causal graph $\mathcal{G}^k_w$, where $k \in \{1,\dots,r\}$, is denoted as $\textbf{MB}(X_t^j; \mathcal{G}^k_w)$, and its values are denoted as $\textbf{MB}(x_t^j; \mathcal{G}^k_w)$. 
%Furthermore, we assume in this section that the window causal graphs for the different regimes $\{\mathcal{G}_w^{k}\}_{k \in \{1, \dots, r\}}$ are  known.
%
Predicting the value of a variable \(X^j_t\) given the other variables, \(\mathbf{X}_{-t}^{-j} = \mathbf{X}_{t} \setminus X_{t}^{j}\), traditionally involves minimizing the expected squared error, and the set of potential covariates can be restricted to the Markov blanket of \(X_t^j\) in the causal graph \(\mathcal{G}^k_w\), where \(k \in \{1, \dots, r\}\), as established by \cite{tsamardinos2003uniquness}: for a class of functions $\mathcal{H}$, 
\begin{equation}\label{eq:minimal_b}
    \underset{h^{j,C_t}\in \mathcal{H}} {\operatorname{argmin}} \mathop{\mathbb{E}}\left[ (X^j_t - h^{j,C_t}(\mathbf{X}_{-t}^{-j}))^2\right]  =\underset{h^{j,C_t}\in \mathcal{H}} {\operatorname{argmin}}  \mathop{\mathbb{E}}\left[ (X^j_t - h^{j,C_t}(\mathbf{MB}(X_{t}^{j};\mathcal{G}_w^{C_t})))^2\right].
\end{equation}
%
%where $h^{j,C_t} \in \mathcal{H}$ denotes a predictor from the class of functions $\mathcal{H}$. 
Since the Markov blanket is identical for all instances of \(X^j\) within a given regime, we leverage this structure to approximate the optimal predictor \(h^{j,k}_{*}\), where \(1 \le k \le r\), using empirical risk minimization:
\begin{equation}\label{eq:emp-version}
    \widehat{h}^{j,k}_{\mathbf{T}} (\mathbf{MB}(x_t^j;\mathcal{G}_w^k))= \underset{h^{j,C_t} \in \mathcal{H}} {\operatorname{argmin}} \sum_{t \in \mathbf{T}} \mathds{1}_{C_t=k} \left( x_t^j - h^{j,C_t}(\mathbf{MB}(x_t^j;\mathcal{G}_w^{C_t})) \right)^2.
\end{equation}

%where \(\mathcal{H}\) is the function class used for approximation. 
Using consistency results similar to the ones developed in \cite{gyorfi2002distribution} then leads to the following theorem, the proof of which is given in Appendix~\ref{appendix:th}, which provides a theoretical criterion for deciding the regime for a given variable at a particular time instant.

% \begin{restatable}{theorem}{primetheorem}\label{th:regime-criterion}
%     Suppose that \(C_t=k\), that the function \(h^{j,k}_{*}(\mathbf{MB}(x_t^j;\mathcal{G}_w^k))\) approximates \(x_t^j\) within a bounded error, i.e., \(\exists M> 0, \forall x_t^j, |x_t^j - h^{j,k}_{*}(\mathbf{MB}(x_t^j;\mathcal{G}_w^k))| < M\), and that the estimate $\widehat{h}^{j,k}_{\mathbf{T}}$ converges to the ''best'' function $h^{j,k}_{*}$ in the following sense:
% %
%  \begin{equation} \label{eq:univ-approx}
%      \lim_{k(\mathbf{T}) \rightarrow +\infty} %\mathop{\mathbb{E}} \left[ 
%      \int_{\mathbf{S}} \left( \widehat{h}^{j,k}_{\mathbf{T}}(\mathbf{S}) - h^{j,k}_{*}(\mathbf{S}) \right)^2 dP(\mathbf{S}) %\right]
%      = 0.
%  \end{equation} Then, for any other regime \(\ell \neq k\):
%     \[
%     \lim_{k(\mathbf{T}) \to +\infty} \mathop{\mathbb{E}}\left[ (X^j_t - \widehat{h}^{j,k}_{\mathbf{T}}(\mathbf{MB}(X_t^{j};\mathcal{G}_w^k)))^2\right] \le \lim_{k(\mathbf{T}) \to +\infty} \mathop{\mathbb{E}}\left[ (X^j_t - \widehat{h}^{j,\ell}_{\mathbf{T}}(\mathbf{MB}(X_t^{j};\mathcal{G}_w^{\ell})))^2\right].
%     \]
% \end{restatable}

\begin{theorem}\label{th:regime-criterion}
    Suppose that \(C_t=k\), that the function \(h^{j,k}_{*}(\mathbf{MB}(x_t^j;\mathcal{G}_w^k))\) approximates \(x_t^j\) within a bounded error, i.e., \(\exists M> 0, \forall x_t^j, |x_t^j - h^{j,k}_{*}(\mathbf{MB}(x_t^j;\mathcal{G}_w^k))| < M\), and that the estimate $\widehat{h}^{j,k}_{\mathbf{T}}$ converges to the ''best'' function $h^{j,k}_{*}$ in the following sense:
 \begin{equation} \label{eq:univ-approx-main-text}
     \lim_{k(\mathbf{T}) \rightarrow +\infty} %\mathop{\mathbb{E}} \left[ 
     \int_{\mathbf{S}} \left( \widehat{h}^{j,k}_{\mathbf{T}}(\mathbf{S}) - h^{j,k}_{*}(\mathbf{S}) \right)^2 dP(\mathbf{S}) %\right]
     = 0,
 \end{equation} where $\mathbf{S}$ is any subset of $\mathbf{X}$. Then, for any other regime \(\ell \neq k\):
    \[
    \lim_{k(\mathbf{T}) \to +\infty} \mathop{\mathbb{E}}\left[ (X^j_t - \widehat{h}^{j,k}_{\mathbf{T}}(\mathbf{MB}(X_t^{j};\mathcal{G}_w^k)))^2\right] \le \lim_{k(\mathbf{T}) \to +\infty} \mathop{\mathbb{E}}\left[ (X^j_t - \widehat{h}^{j,\ell}_{\mathbf{T}}(\mathbf{MB}(X_t^{j};\mathcal{G}_w^{\ell})))^2\right].
    \]
\end{theorem}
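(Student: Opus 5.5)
The plan is to compare both sides with the Bayes-optimal risk under the true regime. Write $\mathbf{S}_k = \mathbf{MB}(X_t^j;\mathcal{G}_w^k)$ and $\mathbf{S}_\ell = \mathbf{MB}(X_t^j;\mathcal{G}_w^\ell)$. The expectation is taken under the distribution of the variables at time $t$ given $C_t=k$.

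First, I will show that the left-hand side converges to $R_k^* := \mathbb{E}[(X_t^j - h_*^{j,k}(\mathbf{S}_k))^2]$. To do this, expand
\[
X_t^j - \widehat{h}^{j,k}_{\mathbf{T}}(\mathbf{S}_k) = \bigl(X_t^j - h_*^{j,k}(\mathbf{S}_k)\bigr) + \bigl(h_*^{j,k}(\mathbf{S}_k) - \widehat{h}^{j,k}_{\mathbf{T}}(\mathbf{S}_k)\bigr).
\]
The square of the second term has expectation tending to $0$ by \eqref{eq:univ-approx-main-text} applied with $\mathbf{S}=\mathbf{S}_k$. For the cross term, I will apply Cauchy--Schwarz together with the bound $|X_t^j - h_*^{j,k}| < M$. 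This gives a bound of $2M$ times the square root of the vanishing $L^2$ distance, which also tends to $0$. Hence the left-hand limit exists and equals $R_k^*$.

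Second, I will show that for every $\mathbf{T}$ the right-hand risk is at least $R_k^*$, so its liminf (or limit, whenever it exists) is at least $R_k^*$. The key fact is \eqref{eq:minimal_b}, the Markov blanket optimality result of \cite{tsamardinos2003uniquness}. Under $C_t=k$, it states that $h_*^{j,k}(\mathbf{S}_k)$ minimizes the expected squared error among all predictors in $\mathcal{H}$ built from $\mathbf{X}_{-t}^{-j}$. Now $\widehat{h}^{j,\ell}_{\mathbf{T}}(\mathbf{S}_\ell)$ is a fixed function of $\mathbf{S}_\ell \subseteq \mathbf{X}_{-t}^{-j}$, hence a function of $\mathbf{X}_{-t}^{-j}$. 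Its risk, evaluated on the regime-$k$ distribution, is therefore at least $R_k^*$. If $\mathcal{H}$ is taken to be all measurable square-integrable functions, $h_*^{j,k}(\mathbf{S}_k) = \mathbb{E}[X_t^j\mid \mathbf{X}_{-t}^{-j}, C_t=k]$ and the inequality is just the Pythagorean identity for conditional expectation. I would state the argument in that setting and remark that it extends when $\mathcal{H}$ is closed under composition with coordinate projections. The estimator $\widehat{h}^{j,\ell}_{\mathbf{T}}$ is fitted on regime-$\ell$ samples, which I treat as independent of the test point at time $t$ (or conditioned on), so it acts as a fixed function.

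Combining the two steps gives the stated inequality. The main obstacle I expect is the second step. Two things need care there. First, the right-hand limit need not exist, so I would phrase the conclusion with $\liminf$, which only strengthens the result. Second, membership of $\widehat{h}^{j,\ell}_{\mathbf{T}}\circ\pi_{\mathbf{S}_\ell}$ in the admissible class must be justified. The cleanest route is the conditional-expectation characterization, which makes optimality hold over all measurable predictors. With that, the inequality holds for each $\mathbf{T}$, and no convergence of the regime-$\ell$ estimator is needed.
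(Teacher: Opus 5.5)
Your proposal is correct and follows the paper's proof. You expand the regime-$k$ risk around $h^{j,k}_{*}$, using the bound $M$, to show it tends to (at most) $\mathbb{E}[(X_t^j - h^{j,k}_{*}(\mathbf{MB}(X_t^j;\mathcal{G}_w^k)))^2]$, and then you use the Markov-blanket optimality of Eq.~\eqref{eq:minimal_b} under $C_t=k$ to bound the regime-$\ell$ risk from below by that same quantity.

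The one real difference works in your favour. You control the cross term in expectation, via Cauchy--Schwarz and the $L^2$ convergence hypothesis directly. The paper instead first upgrades that hypothesis to a uniform pointwise bound $(\widehat{h}^{j,k}_{\mathbf{T}} - h^{j,k}_{*})^2 < \epsilon'^2$, which does not follow from convergence of the integral. Your $\liminf$ phrasing and your remark on the function class likewise make explicit points the paper leaves implicit.
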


\noindent Note that Eq.~\ref{eq:univ-approx-main-text} typically holds for universal approximators under conditions on the class of functions considered \cite{gyorfi2002distribution}.

Under the assumption of sufficient sample size in each regime and bounded prediction error, Theorem~\ref{th:regime-criterion} implies that time \(t\) should be assigned to regime \(k\) if, for all \(\ell \neq k\):

\begin{equation}\label{crit1}
\mathop{\mathbb{E}}\left[ (X^j_t - \widehat{h}^{j,k}_{\mathbf{T}}(\mathbf{MB}(X_t^{j};\mathcal{G}_w^k)))^2\right] < \mathop{\mathbb{E}}\left[ (X^j_t - \widehat{h}^{j,\ell}_{\mathbf{T}}(\mathbf{MB}(X_t^{j};\mathcal{G}_w^{\ell})))^2\right].
\end{equation}

Equality in Equation~\ref{crit1} can arise in three cases: (i) the Markov blankets are identical, (ii) one Markov blanket is a strict superset of the other, or (iii) alternative variable sets provide equivalent prediction accuracy. The following assumption\footnote{This assumption is reasonable as it simply states that different regimes likely lead to different Markov blankets for at least some variables, and that if a set differs from the true Markov blanket without being a superset of it, it will likely yield a prediction different from the one of the true Markov blanket.} helps resolve such ambiguities:

\begin{assumption}\label{ap:different_MB}
    Given a multivariate heterogeneous time series \(\{\mathbf{X}_{t}\}_{t \in  \mathbf{T}}\):
    \begin{description}
         \item[(i)] For any two distinct regimes \(k_1, k_2 \in \{1,\dots, r\}\), there exists a variable \(X^j_t\) such that \(\mathbf{MB}(X^j_t; \mathcal{G}_w^{k_1}) \neq \mathbf{MB}(X^j_t; \mathcal{G}_w^{k_2})\).
         \item[(ii)] For any regime \(k\) and variable \(X^j_t\), if a set of variables \(\mathbf{S}\) is not a superset of %\sout{\(\subset \mathbf{X}\) excludes}
         the true Markov blanket, then the expectation in Theorem~\ref{th:regime-criterion} using \(\mathbf{S}\) differs from the one using \(\mathbf{MB}(X^j_t; \mathcal{G}_w^k)\).
    \end{description}
\end{assumption}

Under Assumption~\ref{ap:different_MB}, if multiple regimes yield the same prediction error, we break ties by selecting the regime with the smallest Markov blanket containing a variable satisfying (i). This leads to the final regime assignment criterion:

\noindent \textbf{Criterion t2r (time to regime):} \textit{Assign \(C_t\) to the regime that minimizes:}
\begin{equation}\label{eq:finalcrit}
\sum_{j =1}^{d} (x^j_t - \widehat{h}^{j,k}_{\mathbf{T}}(\mathbf{MB}(x_t^{j};\widehat{\mathcal{G}}_w^k)))^2.
\end{equation}
\textit{If multiple regimes minimize this criterion, select the one with the smallest Markov blanket over all variables satisfying Assumption~\ref{ap:different_MB}(i).}

The procedure we propose can be summarized as follows: for each regime $k$ and each dimension $j$, a predictor is learned using the Markov blanket of $X^j_t$. Then, a sample $\{x_t^j\}_{j \in \{1, \dots, d\}}$ is assigned to the regime whose predictors yield the lowest overall prediction error. In the case of ties, the regime with the smallest Markov blanket is selected.

 \subsection{RCBNB-MB: an algorithm for causal discovery from multiple regimes}
 \label{sec:method}
 
Finally, the overall problem can be framed as the following optimization task: determine the assignment vector $\mathbf{C} = [C_{t_1}, \dots, C_{t_2}]^{T}$, where each coordinate belongs to $\{1,\dots, r\}$,  the regime-specific window causal graphs $\{\mathcal{G}_w^k\}_{1\leq k \leq r}$ and the functions $\mathbf{H}=\{\widehat{h}^{j,k}_{\mathbf{T}}\}$ for $1 \le j \le d$ and $1 \le k \le r$ that minimize:

\begin{equation}\label{eq:empirical_risk_MB} L_{emp}(\mathbf{x}_{t}; \mathbf{C}, \mathbf{H}, \{\mathcal{G}_w^k\}_{1\leq k \leq r}) = \sum_{j=1}^{d} \sum_{t \in \mathbf{T}} \left({x}^j_{t} - \sum_{k=1}^{r}\mathds{1}_{C_{t}=k} \widehat{h}^{j,k}_{\mathbf{T}}(\mathbf{MB}(x^j_t; \mathcal{G}_w^{k}))\right)^2. 
\end{equation}

To ensure meaningful regime assignments, we furthermore impose the following constraints\label{ass:constraints}: 

\begin{enumerate}
\item Each time point belongs to exactly one regime,
\item Each regime contains at least $N_{\ell}$ observations over the entire time span, with $N_{\ell} \ge (2\tau_{\max}+1)d$, so as to be able to infer it using Markov blankets, 
%\footnote{We impose to have at least $2\tau_{\max}+1$ time instants for each time series in any given regime to be sure to capture the complete Markov blanket of at least one time instant, as required by our approach.}
and at most $N_c$ transitions to prevent overly fragmented assignments.
\end{enumerate}

%
%%\begin{assumption}%[Constraints on the optimization problem]
%%\label{ass:constraints} 
%%We assume that: 
%%\begin{itemize} 
%%\item Each time point belongs to exactly one regime. 
%%\item Each regime must contain at least $N_{\ell}$ observations over the entire time span. \item Each regime must have at least $N_c$ transitions to prevent overly fragmented assignments. 
%%\end{itemize} 
%%\end{assumption}

To solve the optimization problem in Equation~\eqref{eq:empirical_risk_MB} under those constraints, RCBNB-MB proceeds as follows: first, it assigns time points randomly to regimes with $\mathbf{C}^{[0]}$, ensuring that each regime spans at least $N_{\ell}$  timestamps. %This guarantees that even if $\mathcal{G}_w^{k}$ is fully connected, $\widehat{h}^{j,k}_{\mathbf{T}}$ remains solvable. 
%This provides an initial assignment $\mathbf{C}^{[0]}$. 
Then, it alternates between the two steps:
\begin{enumerate}
    \item[Step 1] Given the current assignment $\mathbf{C}^{[\text{ite}]}$, estimate the causal graph $\mathcal{G}_w^k$ for each regime $k$ using CBNB~\cite{bystrova2024hybrids}.
Then, for each variable $X^j_t$, determine its Markov blanket $\mathbf{MB}({X}^j_{t}; \mathcal{G}_w^k)$ and update the function set $\mathbf{H}^{[ite]}$ by minimizing Equation~\eqref{eq:empirical_risk_MB}.
\item[Step 2] Keeping $\mathbf{H}^{[\text{ite}]}$ fixed, update the assignment vector $\mathbf{C}^{[\text{ite+1}]}$ using Criterion $t2r$, subject to constraints. 
This is a standard optimization problem with linear constraints that can be efficiently solved using existing solvers.
\end{enumerate}

To avoid poor convergence, RCBNB-MB runs $N_i$ random initializations and selects the solution with the lowest empirical risk after $N_o$ iterations.

Assignments near regime transitions may be less accurate due to limited past data, but their impact is negligible when regime changes are sparse.

If regime assignments are correct, Step 1 recovers the true window causal graphs (and Markov blankets) as $k(\mathbb{T}) \rightarrow +\infty$ (Theorem~\ref{th:regime-criterion}, Assumption~\ref{ap:different_MB}(ii)). Step 2 reduces the squared error in $L_{emp}$ if observed values are close to their expectations.
% {

While there is no theoretical guarantee that RCBNB-MB consistently converges to the ground truth, empirical results in Section~\ref{sec:exper} show that its outputs are often close to the ground truth in the majority of cases.
%} To mitigate the risk of convergence to poor local minima, RCBNB-MB is executed for $N_i$ different initializations of $\mathbf{C}^{[0]}$, and the best performance according to the empirical risk given in Eq. \eqref{eq:empirical_risk_MB} is kept. For each initialization, the process is iterated up to $N_o$ times. 
 The pseudocode of the algorithm is presented in Appendix~\ref{appendix:pseudo_code}, with the corresponding code available in the supplementary materials.  %Algorithm~\ref{alg:regime_CBNBW}. 

%% file: Experiments.tex
\section{Experiments}
\label{sec:exper}
% To evaluate the accuracy of our method in assigning timestamps to the correct regimes and in reconstructing the window causal graph within each regime, we designed the following experiments. 
We designed the following experiments to evaluate our method's accuracy in regime assignment and causal graph reconstruction.

\paragraph{Baselines} {Our framework, denoted RCBNB-MB, combines CBNB for causal discovery with the Markov blanket (MB) for regime detection. To assess the impact of these choices, we benchmark against alternative causal discovery methods, including VarLiNGAM~\cite{hyvarinen2010estimation}, Dynotears~\cite{pamfil2020dynotears}, PCMCI~\cite{runge2019detecting}, and PCMCI$^+$~\cite{runge2020discovering}, each paired with either the Markov blanket (MB) or direct parents (PA) as the predictive variable set, following the naming convention R\{method\}-\{variable set\}. Notably, RPCMCI-PA corresponds to the method in~\cite{saggioro2020reconstructing}, while RPCMCI-MB is its MB-based variant. We also include CASTOR~\cite{2025_RahmaniF}, the random baseline NegControl~\cite{2025_Petersen}, CD-NOD~\cite{huang2020causal}, and J-PCMCI$^+$~\cite{gunther2023causal} for comparison, while SPACETIME~\cite{Mameche_2025} is excluded due to its high computational cost. Since NegControl, CD-NOD, and J-PCMCI$^+$ cannot detect regimes within the time series, their MER values are not reported. All methods were implemented based on publicly available Python libraries, as described in Appendix~\ref{appendix:source_code}.}

\paragraph{Evaluation metrics} 
% The performance of assigning timestamps to the correct regimes was assessed by calculating the average Mean Error Rate (MER) across all datasets for each length of the time series. In the implementation of the method, the assignment vector $\mathbf{C}$ is represented using one-hot encoding, resulting in a binary matrix of size $r \times |\mathbf{T}|$, denoted as $\mathbf{C}_{bina} = \{C_{bina,t}^{k}\}_{t \in \mathbf{T}, k \in \{1, \dots, r\}}$. Each row corresponds to a specific regime, and each column represents a time instance. In this encoding, $C_{bina, t}^{k} = 1$ indicates that the time instance at $t$ belongs to regime $k$, while $C_{bina, t}^{k} = 0$ indicates that it does not belong to regime $k$ at that time instance. Furthermore, the method's output does not determine the correct order of the regimes relative to the ground truth. While it segments the time series into distinct regimes, it cannot align these separated regimes with the ground truth regimes. Therefore, all possible enumerations of the regimes in the output are considered, and the smallest Mean Error Rate (MER) is selected as the result. For a given enumeration, the MER is computed as follows:

% The performance of reconstructing the window causal graph is then evaluated by calculating the F1-score (oriented), comparing the reconstructed causal graph with the true underlying causal graph within each regime.
We evaluate two aspects of performance: (1) the accuracy of regime assignment and (2) the quality of causal graph reconstruction.

Regime assignment accuracy is measured using the Mean Error Rate (MER), which quantifies the proportion of incorrectly assigned timestamps, up to regime label mismatch. Formally, given the one-hot encoded assignment matrices $\mathbf{C}_{bina}$ (predicted) and $\mathbf{C}^*_{bina}$ (ground truth), MER is defined as
% \begin{equation} \label{eq:MER} MER = \underset{\pi \text{ permutation}}{\operatorname{min}} \frac{| \pi(\mathbf{C}_{bina}) - \mathbf{C}_{bina}^{*}|_{r \times |\mathbf{T}|}}{2|\mathbf{T}|}, \end{equation}
$\mathrm{MER}=\min_{\pi}|\pi(\mathbf{C}_{bina})-\mathbf{C}^{*}_{bina}|_{r \times |\mathbf{T}|}/(2|\mathbf{T}|)$, where $|\cdot|_{r \times |\mathbf{T}|}$ denotes the element-wise $L_1$ norm and $\pi$ is a permutation over rows.

Causal graph reconstruction is evaluated using two complementary metrics: the oriented F1-score, which compares the estimated window causal graphs with the ground-truth graphs within each regime (higher is better), and the normalized Structural Hamming Distance (lower is better), defined as $nSHD = \frac{1}{r}\sum_{k=1}^{r} SHD(\widehat{\mathcal{G}}_w^k, \mathcal{G}_w^k)/|\boldsymbol{E}_w^k|$, where $SHD(\cdot,\cdot)$ counts missing, spurious, and reversed instantaneous edges, and $|\boldsymbol{E}_w^k|$ is the number of edges in the ground-truth graph $\mathcal{G}_w^k$, nSHD can thus exceed 1. We compare with a random graph to get a negative control, as suggested in \cite{2025_Petersen}.

\subsection{Simulated data} 
\label{subsec:simulated_data}
\paragraph{Simulation setup}
%We focused on linear causal relationships among six variables, of length 600 (similar results with length 1200 are given in appendix). 
%However, it is important to note that our proposed method can also be extended to handle 
%Each regime's time instances are consecutive and divide the entire time series equally. For each time series length, we generate 50 datasets. We set $\tau_{\max} = 1$, considering contemporaneous dependencies for each variable. The window causal graph in each regime is generated randomly. Each variable includes a self-causal relationship with a lag of 1, while other edges appear with a probability of 0.3. Measures are implemented to limit the number of common edges between the window causal graphs of the two regimes to no more than 7. Additionally, according to Assumption~\ref{ap:different_MB}, at least one variable's Markov blanket differs between each pair of regimes.
We consider linear causal relationships among six variables over a time series of length 600 (results for length 1,200 are provided in Appendix~\ref{appendix:simu_1200}, and results for a scenario with 15 variables in Appendix~\ref{appendix:15_variables}), with 2 and 3 regimes. 
Each regime consists of consecutive time points, evenly dividing the time series (results for unequal regime sizes are provided in Appendix~\ref{appendix:unequal_regime_size}). Time points at regime boundaries are independent, with the first point of each regime sampled from noise. For each time series length, we generate 50 datasets. We set $\tau_{\max} = 1$, allowing for contemporaneous dependencies. For each regime, the window causal graph is generated from the Erdos-Renyi~\cite{erdds1959random} model: each variable has a self-causal link at lag 1, and all other edges appear with probability 0.3. To enforce regime differences, we constrain the number of shared edges between the causal graphs of two regimes to at most 7 and guarantee that at least one variable has a different Markov blanket across regimes. % (see Constraints~\ref{ap:different_MB}).

Within each regime, the data follow the structural causal model below:
\begin{equation}
	\label{eq:linear_SCM}
	X_{t}^{j} = \sum_{X_{t-\tau}^{i} \in \mathbf{Pa}(X_{t}^{j})}b_{i,j,\tau}X_{t-\tau}^{i} + \epsilon_{t}^{j},
\end{equation}
where the coefficients are sampled as $b_{i,j,\tau} \sim \mathbf{U}((-0.9,-0.5) \cup (0.5,0.9))$ and the noise term follows $\epsilon_{t}^{j} \sim \mathbf{U}(-0.1, 0.1)$. To prevent extreme values, if any observation within a regime exceeds 100, the dataset is regenerated.

\paragraph{Method configuration} 
%{For each method, the regime assignment is initialized by iterating over each regime and timestep, randomly assigning each timestep to a regime with a probability of 0.95. Initially, this process results in a significant overlap between regimes. However, due to Assumption~\ref{ass:constraints}, the optimization process ensures that, in the final assignment, each timestep belongs to only one regime.} We fix the number of initialization points $N_i=50$, the number of maximal optimization iterations $N_o=20$, the number of regimes $r=2$, the minimum size per regime $N_{\ell}=7$ (which is greater than $2\tau_{\max}$), ensuring that we can obtain a unique solution of estimators within each regime, the maximum transition points per regime $N_c=3$, the function class of $\mathbf{H}$ is settled as a linear function. For CBNB-w, it uses PCMCI$^{*}$ in the constraint-based part and VarLiNGAM in the noise-based part. We set the significance threshold \(\alpha\) as 0.1 for all (conditional) independence tests. For other causal discovery methods, VarLiNGAM and Dynotears, we use the default parameters. 
The regime assignment is initialized by randomly assigning each timestep to a regime with a probability of 0.95, initially creating overlaps. However, our constraints %Assumption~\ref{ass:constraints}
ensure that the final assignment is unique for each timestep.
We set the number of initialization points to $N_i=50$, the maximum optimization iterations to $N_o=20$, the number of regimes to $r=2$, and the minimum regime size to $N_{\ell}=18$ ($(2\tau_{\max}+1)d$), ensuring estimator identifiability (see Appendix~\ref{appendix:para_robustness} for a parameter robustness analysis of RCBNB-MB). Each regime allows up to $N_c=3$ transitions. The function class $\mathbf{H}$ is chosen as linear.
$\tau_{\min}$ is set to 0 and $\tau_{\max}$ to 1. For CBNB, PCMCI$^{+}$ is used in the constraint-based part and VarLiNGAM in the noise-based part. The significance threshold $\alpha$ is set to 0.1 for all (conditional) independence tests. VarLiNGAM and Dynotears are run with their default parameters. For CASTOR, CD-NOD, and J-PCMCI$^+$, the default parameters for the linear setting are adopted. For NegControl, the true number of nodes and edges for each dataset are provided as inputs.

 \paragraph{Results} 
 The results are summarized in Table~\ref{tab:results_exp}. First, we analyze the Mean Error Rate (MER), which remains low for most methods. For instance, our method, RCBNB-MB, achieves a MER of 1.27\% in the \textit{2 regimes} scenario, meaning that, on average, only about 8 out of 600 timestamps are misclassified. In the \textit{3 regimes} scenario, this rate further decreases to 0.81\%, demonstrating the method's reliability in correctly assigning timestamps to the appropriate regimes. In contrast, some methods, such as CASTOR, exhibit significantly higher MER values, exceeding 20\%, indicating difficulties in detecting regime changes.

\begin{table}[t]
    \centering
    \scriptsize
    \caption{{\bf Results on generated data.} The Mean Error Rate (MER), the mean F1-score, and the mean normalized Structural Hamming Distance (nSHD) across two scenarios: \textit{2 regimes} and \textit{3 regimes}. The F1-score is evaluated under three conditions: $Total$ (all edges considered), $Lagged$ (only lagged edges considered), and $Instant$ (only instantaneous edges considered). Reported values are means over  50 repetitions, with  time series of  length  600.}
    \label{tab:results_exp}
    \begin{tabular}{l|c|ccc|c|c|ccc|c}
        \hline\hline 
        & \multicolumn{5}{c|}{\textit{2 regimes}} & \multicolumn{5}{c}{\textit{3 regimes}}  \\ 
        & MER & $Total$ & $Lagged$ & $Instant$ & nSHD & MER & $Total$ & $Lagged$ & $Instant$ & nSHD\\ \hline\hline    
        {\bf RCBNB-MB} & 1.27$\%$ & \textbf{0.73}  & \textbf{0.77} & \textbf{0.67} & \textbf{0.52} & 0.81$\%$ & \textbf{0.66} & \textbf{0.71} & \textbf{0.58} & \textbf{0.55} \\
        RCBNB-PA & 1.27$\%$ & 0.69 & 0.73 & 0.62 & 0.54 & 2.93$\%$ & 0.65 & 0.69 & 0.57 & 0.57 \\ \hline
        RPCMCI-MB  & 0.24$\%$ & 0.57  & 0.65 & $\times$ & 1.03 & 0.39$\%$ & 0.54 & 0.62 & $\times$ & 1.07 \\  
        RPCMCI-PA \cite{saggioro2020reconstructing} & 1.26$\%$ & 0.57 & 0.66 & $\times$ & 1.00 & 0.35$\%$ & 0.55 & 0.63 & $\times$ & 1.04 \\ \hline
        RPCMCI$^+$-MB & 1.27$\%$ & 0.64  & 0.70 & 0.46 & 0.60 & 2.95$\%$ & 0.55 & 0.62 & 0.39 & 0.56 \\ 
        RPCMCI$^+$-PA & 10.23$\%$ & 0.55 & 0.61 & 0.36 & 0.66 & 7.79$\%$ & 0.46 & 0.52 & 0.28 & 0.73 \\ \hline
        RVarLiNGAM-MB & 0.28$\%$ & 0.43 & 0.56 & 0.09 & 0.93 & 0.37$\%$ & 0.42 & 0.55 & 0.08 & 0.93 \\
        RVarLiNGAM-PA & 0.48$\%$ & 0.43 & 0.53 & 0.17 & 0.96 & 1.78$\%$ & 0.41 & 0.51 & 0.16 & 0.96 \\ \hline
        RDynotears-MB & 16.91$\%$ & 0.19 & 0.23 & 0.07 & 1.01 & 16.25$\%$ & 0.19 & 0.23 & 0.09 & 1.02 \\
        RDynotears-PA & 15.58$\%$ & 0.18 & 0.23 & 0.07 & 0.99 & 15.03$\%$ & 0.21 & 0.25 & 0.11 & 1.02  \\
        \hline
        NegControl \cite{2025_Petersen} & $\times$ & 0.29 & 0.32 & 0.22 & 1.36 & $\times$ & 0.28 & 0.32 & 0.21 & 1.37 \\
        CD-NOD \cite{huang2020causal}& $\times$ & 0.34 & 0.40 & 0.22 & 0.87 & $\times$ & 0.31 & 0.35 & 0.21 & 0.86 \\
        J-PCMCI$^+$ \cite{gunther2023causal} & $\times$ & 0.41 & 0.48 & 0.19 & 1.13 & $\times$ & 0.37 & 0.44 & 0.15 & 1.27 \\
        CASTOR \cite{2025_RahmaniF} & 20.07$\%$ & 0.12 & 0.15 & 0.05 & 0.99 & 14.17$\%$ & 0.11 & 0.13 & 0.06 & 1.00 \\
        \hline\hline 
    \end{tabular}
\end{table}

Regarding the F1-score, RCBNB-MB achieves the highest overall performance, reaching 0.73 in the \textit{2 regimes} scenario and 0.66 in the \textit{3 regimes} scenario. This confirms its effectiveness in recovering causal structures under different conditions. The other variants of RCBNB also perform well, particularly RCBNB-PA, which maintains competitive scores. PCMCI-based methods show moderate performance, with RPCMCI-MB achieving scores around $0.64$. 
The difference between the two PCMCI-based methods is significantly smaller than the difference between the two PCMCI$^+$-based methods. This can be attributed to the fact that, when the inferred graph does not include instantaneous relations, the estimated set of parents and the estimated Markov blanket are more similar to each other than the corresponding true sets of parents and Markov blanket in the ground-truth graph, which does contain instantaneous relations.

The nSHD results further confirm this trend. RCBNB-MB achieves the lowest nSHD in both scenarios, with values of 0.52 and 0.55, closely followed by RCBNB-PA. This indicates that both variants recover graph structures closest to the ground truth. In contrast, most competing methods obtain nSHD values close to or above 1, reflecting larger structural discrepancies. 

VarLiNGAM-based methods exhibit performance comparable to J-PCMCI$^+$, while CD-NOD performs 
slightly better than the random baseline NegControl. In contrast, CASTOR and RDynotears-based methods fall below random baseline performance, highlighting their limitations in reconstructing causal graphs accurately.

Additionally, methods using the Markov blanket outperform in general those using only parents. This can be attributed to the fact that the Markov blanket offers a more robust set of variables for prediction, as it includes not only parents but also children and spouses. This representation provides MB-based methods with sufficient information to improve predictive accuracy and reduce sign errors, thereby enhancing the reliability of the subsequent causal discovery step.

The variances of the F1-scores are reported in Appendix~\ref{appendix:var_f1_600}. They are generally small, around 0.01, except for RCBNB-PA, where they range between 0.02 and 0.04, and RPCMCI$^+$, which exhibits higher variance, reaching up to 0.09 for PA and 0.05 for MB. These results suggest that while most methods yield stable performance, some, particularly RPCMCI$^+$, show more variability across repetitions.

\subsection{Real data}
\label{subsec:real_data}
\paragraph{Data description} For the real-world experiment, we use eight time series from an IT monitoring system provided by EasyVista\footnote{\url{https://www.easyvista.com/fr/produit/supervision-it/}}, sampled at one-minute intervals, as introduced in~\cite{pmlr-v206-assaad23a}. These time series describe different system activities and are detailed in Appendix~\ref{app:realdata}. According to a system expert~\cite{pmlr-v206-assaad23a}, an anomaly occurs between timesteps 46,683 and 46,783. To construct our dataset, we include an additional 1,000 timesteps before and after this interval, resulting in a total of 2,100 timesteps for analysis.

In \cite{pmlr-v206-assaad23a}, only a summary representation of the window causal graph for the normal regime was provided by the system expert. Therefore, to assess the performance of our method in this scenario, we first convert the obtained window causal graph into a summary graph~\cite{assaad2022discovery} and then compute the F1-score. Furthermore, since we have access only to the ground truth causal structure of the normal regime, our evaluation is limited to the inferred causal graphs within the regimes that overlap with the true normal regime.

\paragraph{Method configuration} Based on the method's performance in Section~\ref{subsec:simulated_data}, RCBNB-MB stands out, so we focus on this method here. Due to the increased number of variables, we set the minimum regime size to $N_{\ell}=24$ ($(2\tau_{\max}+1)d$) with $\tau_{\max}=1$.  Additionally, we set the number of initialization points to $N_i=200$, the number of regimes to $r=2 \text{ and } 3$, and allow up to $N_c=4$ transitions per regime. Other parameters remain the same as those in Section~\ref{subsec:simulated_data}.

\paragraph{Results} In this experiment, the final value of the objective function, corresponding to Equation~\ref{eq:empirical_risk_MB}, is 3.56 when assuming two regimes and 3.02 when assuming three regimes.
Figure~\ref{fig:IT_regime_assignment} illustrates the detected change points and the corresponding regimes identified by RCBNB-MB when provided with prior knowledge of either two or three regimes. According to the system expert, the ground truth consists of only two regimes: normal and abnormal. The figure clearly demonstrates that RCBNB-MB effectively detects the regimes in both cases. Even when the number of regimes is set to three, the algorithm accurately identifies the normal and abnormal regimes, with only minor deviations. Additionally, it detects a third regime at the boundary between the two true regimes, suggesting a possible transitional phase between them.
Table~\ref{tab:IT_f1} presents the F1-scores for detecting a summary representation of the window causal graph in the normal regime. Compared to previous studies, such as \cite{aïtbachir2023case}, where classical causal discovery methods were used to infer the graph, our method demonstrates superior performance.

% \begin{figure}
%     \centering
%     \includegraphics[width=\textwidth]{Images/res_regimes.png}
%     \caption{The figures above illustrate the regime assignment results obtained using the RCBNB-MB under different assumptions. The top figure corresponds to the case where three regimes are assumed: blue represents the normal regime, rose indicates the first abnormal regime, and green denotes the second abnormal regime. The middle figure shows the regime assignment when assuming only two regimes: blue for the normal regime and rose for the abnormal regime. The bottom figure presents the temporal variation of eight metrics within the IT monitoring system, aligned with the two regime assignments described above. On the horizontal axis, the circle marker ($\circ$) and the cross marker ($\times$) indicate the start and end points of the anomaly, as identified by experts, occurring at timesteps 46,683 and 46,783, respectively.}
%     \label{fig:IT_regime_assignment}
% \end{figure}

\begin{figure}[t]
    \centering
    \includegraphics[width=0.7\textwidth]{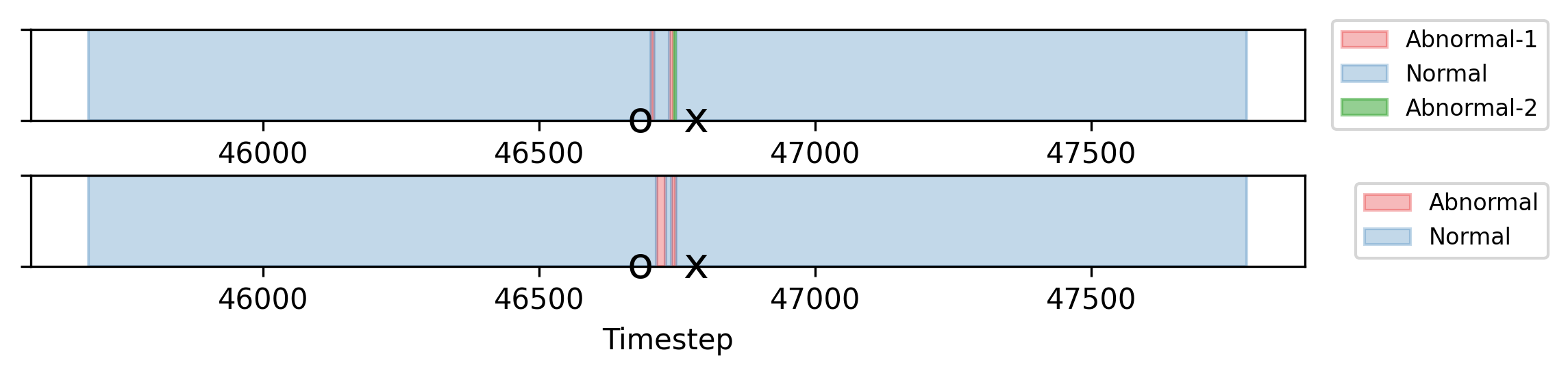}
    \caption{RCBNB-MB assignments assuming 3 regimes (top) and 2 regimes (bottom). Blue denotes normal regime, while rose and green denote abnormal regimes. The circle ($\circ$) and cross ($\times$) mark the expert identified anomaly start and end at timestamps 46,683 and 46,783, respectively.}
    \label{fig:IT_regime_assignment}
\end{figure}

% \begin{figure}
%     \centering
%     \includegraphics[width=\textwidth]{Images/SCG_IT_monitoring.pdf}
%     \caption{(a) presents the expert-provided SCG for the eight time series in the IT monitoring system under the normal regime. (b) and (c) illustrate the reconstructed SCG of RegimeMB(CBNB-w) for the blue (normal) and red (abnormal) regimes, respectively, as shown in Figure~\ref{fig:IT_regime_seperation}. }
%     \label{fig:causal_graph_IT}
% \end{figure}

\begin{table}[!t]
    \centering
    \label{tab:F1}
    %  \resizebox{9.5cm}{!}{
    \caption{Results on real data. The F1-score for each regime is computed for our method RCBNB-MB, with 2 and 3 regimes. }
    \begin{tabular}{c|c|c}
        \hline\hline 
        &loss & F1-score in normal regimes\\ \hline
        2 regimes & 3.56 & 0.53 \\  
        3 regimes & 3.02& 0.53 \\
        \hline\hline 
    \end{tabular}
    % }
        \label{tab:IT_f1}
\end{table}

%% file: Conclusion.tex
\section{Conclusion and Perspective}
\label{sec:conclu}

Understanding causal relationships in complex dynamic systems is important across many fields. Growing data availability creates opportunities but also challenges, particularly for heterogeneous time series with multiple regimes and changing causal mechanisms. We propose a method that identifies these regimes and recovers their causal structures by treating regime assignment as a prediction task based on Markov blankets. The use of Markov blankets in this setting is supported by both theoretical and experimental arguments. Lastly, we demonstrated the effectiveness of our approach in uncovering regime-dependent causal structures through an empirical validation on synthetic and real-world IT monitoring data.

%To address this challenge, we proposed a novel method which can identify different regimes and accurately recover causal relationships within the regime. This method views the regime assignment problem as a prediction task and leverages Markov blankets for this prediction. The use of Markov blankets in this setting is supported by both theoretical and experimental arguments. Lastly, we demonstrated the effectiveness of our approach in uncovering regime-dependent causal structures through an empirical validation on synthetic and real-world IT monitoring data.
%We have furthermore provided theoretical support for using showing that the reconstructed window causal graphs within each regime converge to the true graph and that the assignment of timestamps converges to the ground truth. Experiments on synthetic data demonstrate the method’s effectiveness in correctly assigning timestamps to the appropriate regimes and discovering the correct window causal graph within each regime.

There are several interesting aspects to explore in future work. In our experiments, we only considered linear relationships among variables and time series containing two or three distinct regimes. More complex scenarios involving non-linear relationships and more than three distinct regimes need to be tested. Additionally, timestamps at the boundaries of two distinct regimes are difficult to assign correctly due to the loss of information about the Markov blanket. Further research is needed to improve the assignment of these boundary timestamps. Lastly, we would also like to explore its potential for root cause analysis and anomaly detection.

%In terms of applications, it would be valuable to evaluate our proposed method on additional real-world datasets, such as climate data or epidemiological data. Furthermore, exploring its potential for root cause analysis and anomaly detection would be particularly insightful.
%Regarding applications, it would be interesting to test the performance of our proposed method in root cause analysis. The data used in this task often exhibit heterogeneity, typically including two regimes: normal and abnormal. Determining the transition between these regimes is also challenging. By applying RCBNB-MB, \textcolor{red}{we can theoretically identify  root causes and pinpoint the timestamp when the system entered an anomalous state. Root causes are nodes whose structures (incoming and outgoing edges) change between the window causal graphs of the normal and abnormal regimes. The onset of the abnormal event is indicated by the regime change point.}

%% file: Appendix.tex
\section{Proof of Theorem~\ref{th:regime-criterion}}
\label{appendix:th}

% For certain families of functions, which typically are universal approximators, and under certain conditions which depend on the family considered, see for example \cite{gyorfi2002distribution}, the estimate $\widehat{h}^{j,k}_{\mathbf{T}}$ converges to the ''best'' function $h^{j,k}_{*}$ in the following sense:
% %
% \begin{equation} \label{eq:univ-approx}
%     \lim_{k(\mathbf{T}) \rightarrow +\infty} %\mathop{\mathbb{E}} \left[ 
%     \int_{\mathbf{MB}(x_k^j;\mathcal{G}_w^k)} \left( \widehat{h}^{j,k}_{\mathbf{T}}(\mathbf{MB}(x_k^j;\mathcal{G}_w^k)) - h^{j,k}_{*}(\mathbf{MB}(x_k^j;\mathcal{G}_w^k)) \right)^2 dP(\mathbf{MB}(x_k^j;\mathcal{G}_w^k)) %\right]
%     = 0,
% \end{equation}
% %
% where $k(\mathbf{T}) = \sum_{t \in \mathbf{T}} \mathds{1}_{C_t=k}$ and the integral is taken over all possible values of the Markov blanket of all variables $X_{t,t \in \mathbf{T}}^j$ s.t. $C_t=k$.

% The above development leads to the following theorem which provides a theoretical criterion for deciding the regime for a given variable at a particular time instant.
%\primetheorem*
\setcounter{theorem}{0}
\begin{theorem}
    Suppose that \(C_t=k\), that the function \(h^{j,k}_{*}(\mathbf{MB}(x_t^j;\mathcal{G}_w^k))\) approximates \(x_t^j\) within a bounded error, i.e., \(\exists M> 0, \forall x_t^j, |x_t^j - h^{j,k}_{*}(\mathbf{MB}(x_t^j;\mathcal{G}_w^k))| < M\), and that the estimate $\widehat{h}^{j,k}_{\mathbf{T}}$ converges to the ''best'' function $h^{j,k}_{*}$ in the following sense:
 \begin{equation} \label{eq:univ-approx}
     \lim_{k(\mathbf{T}) \rightarrow +\infty} %\mathop{\mathbb{E}} \left[ 
     \int_{\mathbf{S}} \left( \widehat{h}^{j,k}_{\mathbf{T}}(\mathbf{S}) - h^{j,k}_{*}(\mathbf{S}) \right)^2 dP(\mathbf{S}) %\right]
     = 0,
 \end{equation} where $\mathbf{S}$ is any subset of $\mathbf{X}$. Then, for any other regime \(\ell \neq k\):
    \[
    \lim_{k(\mathbf{T}) \to +\infty} \mathop{\mathbb{E}}\left[ (X^j_t - \widehat{h}^{j,k}_{\mathbf{T}}(\mathbf{MB}(X_t^{j};\mathcal{G}_w^k)))^2\right] \le \lim_{k(\mathbf{T}) \to +\infty} \mathop{\mathbb{E}}\left[ (X^j_t - \widehat{h}^{j,\ell}_{\mathbf{T}}(\mathbf{MB}(X_t^{j};\mathcal{G}_w^{\ell})))^2\right].
    \]
\end{theorem}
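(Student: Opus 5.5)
We would prove the theorem by comparing both limiting risks with the risk of the population-optimal predictor $h^{j,k}_{*}$ built on the true Markov blanket. Write $\mathbf{M}_k=\mathbf{MB}(X_t^j;\mathcal{G}_w^k)$ and $\mathbf{M}_\ell=\mathbf{MB}(X_t^j;\mathcal{G}_w^\ell)$. All expectations are taken under the true distribution of $\mathbf{X}_t$ given $C_t=k$. The plan has two halves: (a) the left-hand limit equals $R^*:=\mathbb{E}[(X_t^j-h^{j,k}_*(\mathbf{M}_k))^2]$; (b) every function of any subset of $\mathbf{X}_{-t}^{-j}$, in particular $\widehat{h}^{j,\ell}_{\mathbf{T}}(\mathbf{M}_\ell)$, has risk at least $R^*$ under this distribution. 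Taking limits in (b) then gives the inequality.

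For (a), we expand the square:
\begin{equation*}
\mathbb{E}\big[(X_t^j-\widehat{h}^{j,k}_{\mathbf{T}}(\mathbf{M}_k))^2\big]=R^*+2\,\mathbb{E}\big[(X_t^j-h^{j,k}_*(\mathbf{M}_k))(h^{j,k}_*(\mathbf{M}_k)-\widehat{h}^{j,k}_{\mathbf{T}}(\mathbf{M}_k))\big]+\Delta_{\mathbf{T}},
\end{equation*}
where $\Delta_{\mathbf{T}}=\int(\widehat{h}^{j,k}_{\mathbf{T}}-h^{j,k}_*)^2\,dP(\mathbf{M}_k)$. By Equation~\eqref{eq:univ-approx} applied with $\mathbf{S}=\mathbf{M}_k$, we have $\Delta_{\mathbf{T}}\to0$. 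For the cross term, the bounded-error hypothesis $|X_t^j-h^{j,k}_*(\mathbf{M}_k)|<M$ together with Cauchy--Schwarz gives $|\text{cross}|\le 2M\sqrt{\Delta_{\mathbf{T}}}$, which also tends to $0$. Hence the left-hand side converges to $R^*$. This is the only place the boundedness assumption is needed, and it avoids any need to know whether $h^{j,k}_*$ is exactly the conditional mean.

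For (b), we use Equation~\eqref{eq:minimal_b}, the Markov blanket optimality result of \cite{tsamardinos2003uniquness}. Under regime $k$, $X_t^j$ is independent of $\mathbf{X}_{-t}^{-j}\setminus\mathbf{M}_k$ given $\mathbf{M}_k$. Therefore the minimiser over $\mathcal{H}$ of the squared risk using all of $\mathbf{X}_{-t}^{-j}$ is attained by a function of $\mathbf{M}_k$ alone, namely $h^{j,k}_*$. Any function of $\mathbf{M}_\ell\subseteq\mathbf{X}_{-t}^{-j}$ is in particular a function of $\mathbf{X}_{-t}^{-j}$. Provided $\widehat{h}^{j,\ell}_{\mathbf{T}}(\mathbf{M}_\ell)$, viewed as a function of $\mathbf{X}_{-t}^{-j}$, lies in $\mathcal{H}$, its risk is therefore at least $R^*$ for every $\mathbf{T}$. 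Passing to the limit (or $\liminf$ if the limit is not assumed to exist) yields the claim.

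The main obstacle is the last step: making precise that $h^{j,k}_*$ is optimal over a class containing all the competing predictors. If $\mathcal{H}$ is not closed under composition with coordinate projections, the comparison could fail. I would first try to take $\mathcal{H}$ rich enough that the minimiser in Equation~\eqref{eq:minimal_b} is the conditional expectation $\mathbb{E}[X_t^j\mid\mathbf{M}_k]$. The $L^2$-projection property then gives the inequality against any measurable function of $\mathbf{X}_{-t}^{-j}$ directly, with no structural condition on $\mathcal{H}$. A second, minor point is that $\widehat{h}^{j,\ell}_{\mathbf{T}}$ is random (trained on regime-$\ell$ data). We condition on the training sample, which is independent of the test point at $t$, so the inequality holds pointwise in the sample before taking expectations.
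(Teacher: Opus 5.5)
Your proposal is correct and follows the same two-step structure as the paper's proof: expand the square around $h^{j,k}_{*}$ and control the cross term with the bound $M$, then invoke the Markov-blanket optimality \eqref{eq:minimal_b} to compare $h^{j,k}_{*}$ with the regime-$\ell$ predictor. The differences favour you: your Cauchy--Schwarz bound $2M\sqrt{\Delta_{\mathbf{T}}}$ uses the integral (i.e.\ $L^2$) convergence hypothesis as stated, whereas the paper tacitly upgrades it to a uniform bound $(\widehat{h}^{j,k}_{\mathbf{T}}-h^{j,k}_{*})^2<\epsilon'^2$ at every Markov-blanket value, which does not follow, and your caveat on the richness of $\mathcal{H}$ makes explicit an assumption the paper uses silently when applying \eqref{eq:minimal_b}.
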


%\begin{theorem}\label{th:regime-criterion}
%     Let us assume that $C_t=k$ and that the function $h^{j,k}_{*}(\mathbf{MB}(x_t^j;\mathcal{G}_w^k))$ is never too far from $x_t^j$, that is: $\exists M> 0, \forall x_t^j, |x_t^j - h^{j,k}_{*}(\mathbf{MB}(x_t^j;\mathcal{G}_w^k))| < M$. Then $\forall 1 \le \ell \ne k \le r, \forall \epsilon > 0, \exists \mathbf{T}_0$ s. t. $\forall k(\mathbf{T}) \ge k(\mathbf{T}_0)$,
%     \[
%     \mathop{\mathbb{E}}\left[ (X^j_t - \widehat{h}^{j,k}_{\mathbf{T}}(\mathbf{MB}(X_t^{j};\mathcal{G}_w^k)))^2\right] < \mathop{\mathbb{E}}\left[ (X^j_t - \widehat{h}^{j,\ell}_{\mathbf{T}}(\mathbf{MB}(X_t^{j};\mathcal{G}_w^{\ell})))^2\right] + \epsilon.
%     \]
%     In other words, $\forall 1 \le \ell \ne k \le r$:
%     \[
%     \lim_{k(\mathbf{T}) \rightarrow +\infty} \mathop{\mathbb{E}}\left[ (X^j_t - \widehat{h}^{j,k}_{\mathbf{T}}(\mathbf{MB}(X_t^{j};\mathcal{G}_w^k)))^2\right] \le \lim_{k(\mathbf{T}) \rightarrow \infty} \mathop{\mathbb{E}}\left[ (X^j_t - \widehat{h}^{j,\ell}_{\mathbf{T}}(\mathbf{MB}(X_t^{j};\mathcal{G}_w^{\ell})))^2\right].
%     \]
% \end{theorem}
% %
 \begin{proof}
     Let $\epsilon > 0$. Equation~\ref{eq:univ-approx} implies that $\forall \epsilon' > 0, \exists \mathbf{T}_0$ s.t. $\forall k(\mathbf{T}) \ge k(\mathbf{T}_0), \forall \mathbf{MB}(x_k^j;\mathcal{G}_w^k)$,
     \[
     \left( \widehat{h}^{j,k}_{\mathbf{T}}(\mathbf{MB}(x_k^j;\mathcal{G}_w^k)) - h^{j,k}_{*}(\mathbf{MB}(x_k^j;\mathcal{G}_w^k)) \right)^2 < \epsilon'^2,
     \]
     which implies, by developing $(x_t^j - \widehat{h}^{j,k}_{\mathbf{T}}(\mathbf{MB}(X_t^{j};\mathcal{G}_w^k)))^2$ as $(x_t^j - h^{j,k}_{*}(\mathbf{MB}(x_k^j;\mathcal{G}_w^k)) + h^{j,k}_{*}(\mathbf{MB}(x_k^j;\mathcal{G}_w^k)) - \widehat{h}^{j,k}_{\mathbf{T}}(\mathbf{MB}(X_t^{j};\mathcal{G}_w^k)))^2$, that $\forall x_t^j$:
     \[
     (x_t^j - \widehat{h}^{j,k}_{\mathbf{T}}(\mathbf{MB}(x_t^{j};\mathcal{G}_w^k)))^2 < (x_t^j - h^{j,k}_{*}(\mathbf{MB}(x_t^{j};\mathcal{G}_w^k)))^2 + \epsilon'^2 + 2 M\epsilon'.
     \]
     Choosing $\epsilon'$ such that $\epsilon'^2 + 2 M\epsilon' < \epsilon$ and taking the expectation leads to:
     \begin{equation}\label{eq:exp1}
     \mathop{\mathbb{E}}\left[ (X^j_t - \widehat{h}^{j,k}_{\mathbf{T}}(\mathbf{MB}(X_t^{j};\mathcal{G}_w^k)))^2\right] < \mathop{\mathbb{E}}\left[ (X^j_t - h^{j,k}_{*}(\mathbf{MB}(X_t^{j};\mathcal{G}_w^{k})))^2\right] + \epsilon.
     \end{equation}
     As $C_t=k$, one has (Equation~\ref{eq:minimal_b}):
     \begin{equation}\label{eq:exp2}
     \mathop{\mathbb{E}}\left[ (X^j_t - h^{j,k}_{*}(\mathbf{MB}(X_t^{j};\mathcal{G}_w^k)))^2\right] \le \mathop{\mathbb{E}}\left[ (X^j_t - \widehat{h}^{j,\ell}_{\mathbf{T}}(\mathbf{MB}(X_t^{j};\mathcal{G}_w^{\ell})))^2\right].
     \end{equation}
     Combining the inequalities~\ref{eq:exp1} and \ref{eq:exp2} concludes the proof.
 \end{proof}
% %

% Provided that the boundedness assumption holds for each variable, Theorem~\ref{th:regime-criterion} implies that $\forall \epsilon > 0, \exists \mathbf{T}_0$ s. t. $\forall k(\mathbf{T}) \ge k(\mathbf{T}_0)$,
%     \begin{equation}\label{eq:sumexp}
%     \sum_{j=1}^d \mathop{\mathbb{E}}\left[ (X^j_t - \widehat{h}^{j,k}_{\mathbf{T}}(\mathbf{MB}(X_t^{j};\mathcal{G}_w^k)))^2\right] < \sum_{j=1}^d \mathop{\mathbb{E}}\left[ (X^j_t - \widehat{h}^{j,\ell}_{\mathbf{T}}(\mathbf{MB}(X_t^{j};\mathcal{G}_w^{\ell})))^2\right] + \epsilon.
%     \end{equation}
% It also implies that, provided that all $k(\mathbf{T}), \, 1 \le k \le r$, are sufficiently large, if, for any $j$, one has that for all $1 \le \ell \ne k \le r$:
% %
% \begin{equation}\label{crit1}
% \mathop{\mathbb{E}}\left[ (X^j_t - \widehat{h}^{j,k}_{\mathbf{T}}(\mathbf{MB}(X_t^{j};\mathcal{G}_w^k)))^2\right] < \mathop{\mathbb{E}}\left[ (X^j_t - \widehat{h}^{j,\ell}_{\mathbf{T}}(\mathbf{MB}(X_t^{j};\mathcal{G}_w^{\ell})))^2\right],
% \end{equation}
% %
% %\textcolor{red}{pour tout j? somme sur j?}
% then $t$ should be assigned to regime $k$. There are three cases which can lead to an equality between the two terms in Inequality~\ref{crit1}: the two Markov blankets are equal, one is a superset of the other, there are other sets of variables, in addition to the Markov blanket, minimizing Equation~\ref{eq:minimal_b}. The following assumption allows one to differentiate between these cases. %To rule out these cases, we rely on the following assumption.

\clearpage
\section{Pseudo-code of RCBNB-MB}
\label{appendix:pseudo_code}
Algorithm~\ref{alg:regime_CBNBW} presents the pseudo-code of RCBNB-MB, which iteratively partitions the time series into regimes and infers regime-specific window causal graphs using the CBNB method. The algorithm initializes multiple segmentations and alternates between estimating causal structures and updating regime assignments by minimizing the empirical risk function (Equation~\eqref{eq:empirical_risk_MB}) subject to Constraints 1 and 2 in Section~\ref{sec:method}. Within each regime, it determines the Markov Blanket of each variable to enhance segmentation accuracy. After multiple iterations, the segmentation and window causal graphs corresponding to the lowest empirical risk are selected.

\begin{algorithm}[!ht]
\caption{RCBNB-MB}
\label{alg:regime_CBNBW}
\begin{algorithmic}[1]
\STATE \textbf{Input:} Heterogeneous time series data $\{\mathbf{x}_{t}\}_{t \in \mathbf{T}}$, number of initialization points $N_i$, maximal number of optimization iterations $N_o$, number of regimes $r$, minimum size per regime $N_{\ell}$, maximum transition points per regime $N_c$, the family of $\mathbf{H}$.

\FOR{$a = 0$ \TO $N_i$}
    \STATE Initialize $\mathbf{C}$ by $\mathbf{C}^{[0]}$ ensuring each regime index lasts at least $(2\tau_{\max}+1)\times d$ consecutive timestamps.
    \FOR{$\text{ite} = 0$ \TO $N_o$}
        \STATE \textbf{Step 1: Estimate the window causal graph and estimating functions for each regime}
        \STATE Determine $\{\Upsilon^k\}_{k \in \{1, \dots, r\}}$ based on $\mathbf{C}^{[\text{ite}]}$.
        \FOR{$k = 1$ \TO $r$}
            \STATE Use CBNB on $\{\mathbf{x}_{t}\}_{t \in \Upsilon^k}$ to estimate $\mathcal{G}_w^k$.
            \STATE For each dimension $j \in \{1, \dots, d\}$, deduce $\mathbf{MB}({X}^j_{t}; \mathcal{G}_w^k)$, the smallest set of variables that renders ${X}^j_{t}$ conditionally independent from all others within $\mathcal{G}_w^k$.
            \STATE Update $\mathbf{H}^{[\text{ite}]}$ for each dimension $j \in \{1, \dots, d\}$ within regime $k$ using $\{\mathbf{x}_{t}\}_{t \in \Upsilon_k}$ and the corresponding $\mathbf{MB}({X}^j_{t}; \mathcal{G}_w^k)$.
        \ENDFOR
        \STATE \textbf{Step 2: Update regime assignments}
        \STATE Update $\mathbf{C}^{[\text{ite+1}]}$ by solving the minimization problem in Equation~\eqref{eq:empirical_risk_MB} subject to Constraints 1 and 2 in Section~\ref{sec:method}. Save the resulting value of Equation~\eqref{eq:empirical_risk_MB} as $\mathbf{L}^{[\text{ite}]}$.
    \STATE Save $List\mathbf{L}[a] = \mathbf{L}^{[N_o]}$, $List\mathbf{C}[a] = \mathbf{C}^{[N_o+1]}$, and $List\mathcal{G}[a] = \{\mathcal{G}_w^k\}_{k \in \{1, \dots, r\}}^{[N_o]}$.
    \ENDFOR
\ENDFOR
\STATE $Index = \arg\min_{a} List\mathbf{L}[a]$.
\STATE $\mathbf{C} = List\mathbf{C}[Index]$, $\{\mathcal{G}_w^k\}_{k \in \{1, \dots, r\}} = List\mathcal{G}[Index]$.
\STATE \textbf{Output:} Regime assignments $\mathbf{C}$ and window causal graphs $\{\mathcal{G}_w^k\}_{k \in \{1, \dots, r\}}$.

\end{algorithmic}
\end{algorithm}

\clearpage
\section{Supplementary Experiments}
\subsection{Parameter Robustness}
\label{appendix:para_robustness}
In this section, we investigate the sensitivity of RCBNB-MB to three key hyperparameters: the number of initialization points $N_i$, the maximum number of optimization iterations $N_o$, and the minimum size per regime $N_{\ell}$. We use the simulated datasets from Section~\ref{subsec:simulated_data}, which consist of 50 datasets, each containing 6 variables with time series of length 600. We set $N_i = 50$, $N_o = 20$, and $N_{\ell} = 18$ as the default configuration, and vary each hyperparameter independently while keeping the others fixed.

\paragraph{Effect of $N_i$.} We vary $N_i \in \{10, 30, 50, 70, 90\}$ while fixing $N_o$ and $N_{\ell}$. Figures~\ref{fig:N_i_f1} and~\ref{fig:N_i_mer} show the mean F1-score (with standard deviation) and the MER as functions of $N_i$, respectively. The mean F1-score increases gradually with $N_i$, while the standard deviation remains approximately constant. Correspondingly, MER exhibits a declining trend as $N_i$ grows. This behavior is expected, as a larger number of initialization points increases the probability that the optimization procedure finds an assignment close to the true regime segmentation, thereby improving causal graph reconstruction within each regime. Importantly, the performance remains relatively stable across the evaluated range of $N_i$.

\paragraph{Effect of $N_o$.} We vary $N_o \in \{5, 10, 20, 30, 40\}$ while fixing $N_i$ and $N_{\ell}$. Figures~\ref{fig:N_o_f1} and~\ref{fig:N_o_mer} report the corresponding F1-score and MER. The F1-score shows a modest increasing trend with $N_o$, while the standard deviation remains stable, and MER generally decreases as $N_o$ grows, despite a transient peak at $N_o=10$. This is consistent with the intuition that allowing more optimization iterations per run increases the likelihood of converging to a more accurate regime assignment and causal structure.

\paragraph{Effect of $N_{\ell}$.} We vary $N_{\ell} \in \{6, 12, 18, 24, 30\}$ while fixing $N_i$ and $N_o$. Figures~\ref{fig:N_l_f1} and~\ref{fig:N_l_mer} show the corresponding results. Both the F1-score and MER remain flat across the evaluated range, confirming that RCBNB-MB is largely insensitive to this parameter. This is consistent with its role as a safeguard, as $N_{\ell}$ enforces a minimum number of time steps per regime to ensure that the regression and causal discovery steps within the optimization are statistically well-posed. When the optimization proceeds normally, this constraint is rarely binding, and its value has little effect on the final outcome. Its primary function is to prevent degenerate cases in which no time steps are assigned to a given regime.

\begin{figure}[!h]
\centering

\begin{subfigure}{0.45\textwidth}
\centering
\includegraphics[width=\linewidth]{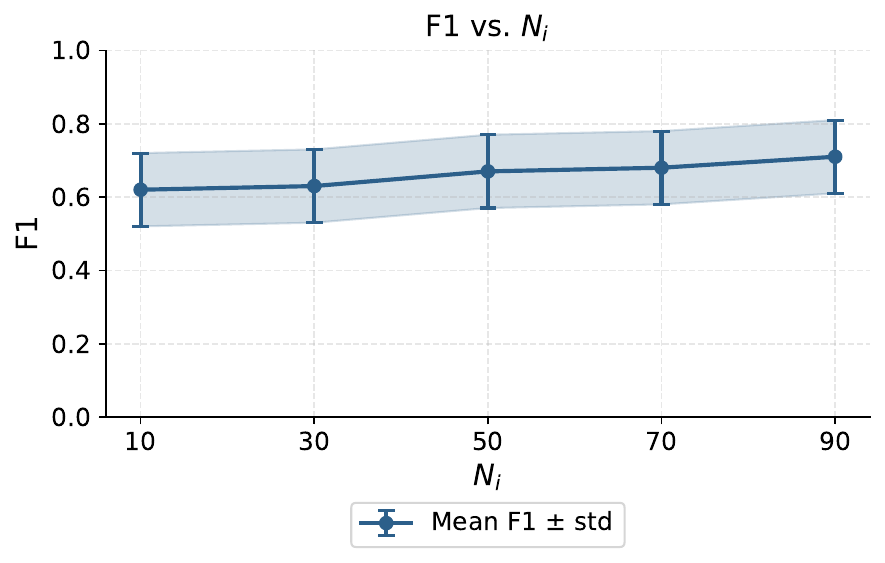}
\caption{F1-score vs.\ $N_i$.}
\label{fig:N_i_f1}
\end{subfigure}
\hfill
\begin{subfigure}{0.45\textwidth}
\centering
\includegraphics[width=\linewidth]{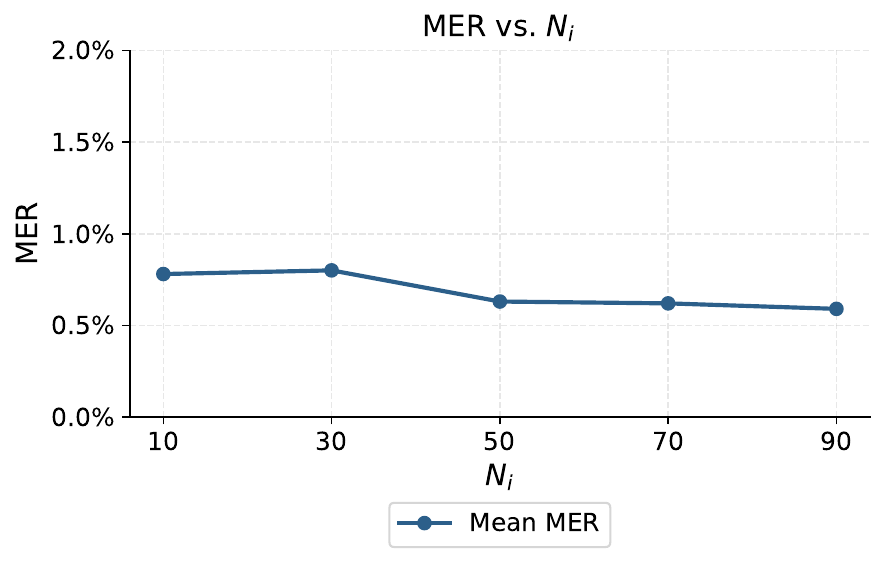}
\caption{MER vs.\ $N_i$.}
\label{fig:N_i_mer}
\end{subfigure}

\begin{subfigure}{0.45\textwidth}
\centering
\includegraphics[width=\linewidth]{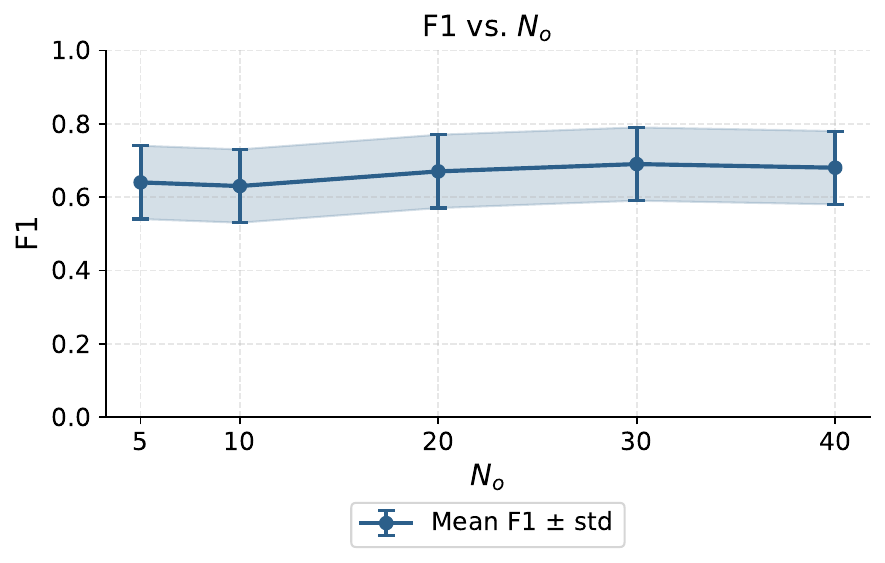}
\caption{F1-score vs.\ $N_o$.}
\label{fig:N_o_f1}
\end{subfigure}
\hfill
\begin{subfigure}{0.45\textwidth}
\centering
\includegraphics[width=\linewidth]{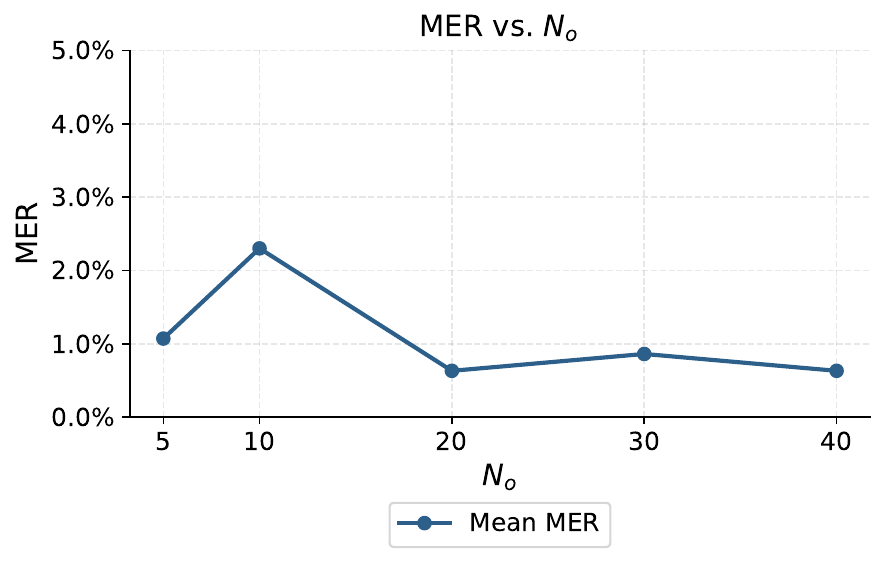}
\caption{MER vs.\ $N_{o}$.}
\label{fig:N_o_mer}
\end{subfigure}

\begin{subfigure}{0.45\textwidth}
\centering
\includegraphics[width=\linewidth]{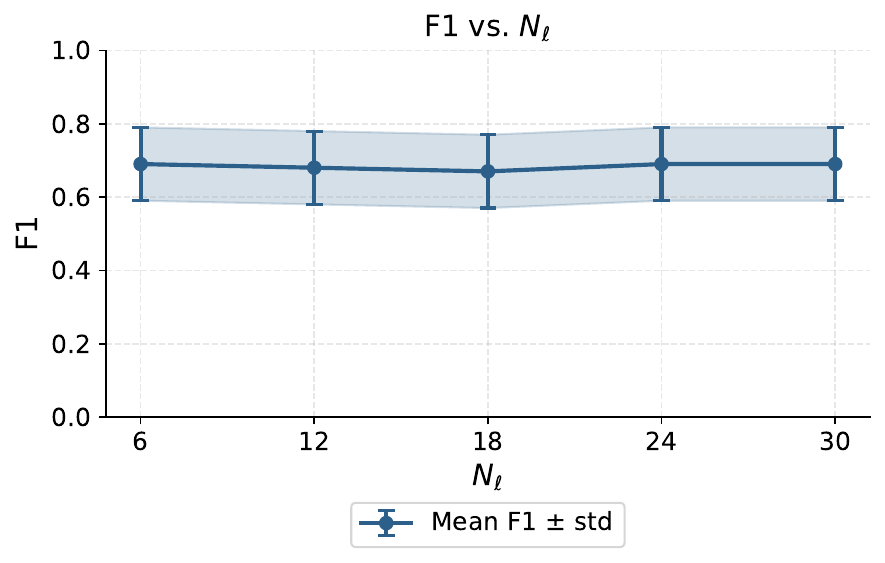}
\caption{F1-score vs.\ $N_{\ell}$.}
\label{fig:N_l_f1}
\end{subfigure}
\hfill
\begin{subfigure}{0.45\textwidth}
\centering
\includegraphics[width=\linewidth]{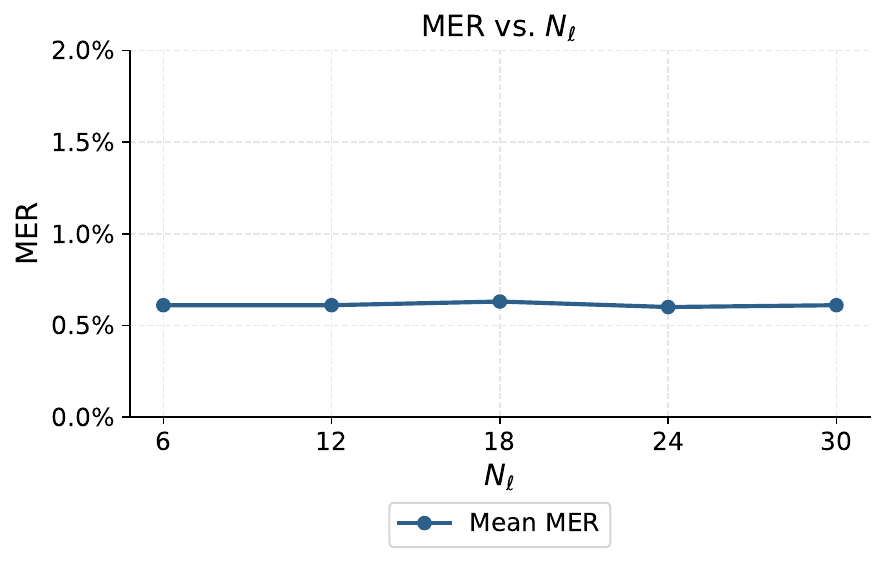}
\caption{MER vs.\ $N_{\ell}$.}
\label{fig:N_l_mer}
\end{subfigure}

\caption{Sensitivity analysis of RCBNB-MB with respect to three hyperparameters: the number of initialization points $N_i$ (top row), the maximum number of optimization iterations $N_o$ (middle row), and the minimum size per regime $N_\ell$ (bottom row). The left and right columns report the mean F1-score with standard deviation and the mean MER, respectively, evaluated over 50 simulated datasets with a time series length of 600.}
\label{fig:hyper_parameter_varying}

\end{figure}

\clearpage
\subsection{Simulated Data with a Length of 1,200}
\label{appendix:simu_1200}

The results of each method are presented in Table~\ref{tab:results_exp_1200}. We first analyze the Mean Error Rate (MER), which remains low for most methods. Compared to the case where the length of the time series is 600, most methods exhibit a lower MER. Notably, RCBNB-MB achieves a MER of 0.14\% in the \textit{2 regimes} scenario and 0.64\% in the \textit{3 regimes} scenario. In contrast, certain methods, such as CASTOR, display significantly higher MER values, exceeding 23\%, indicating difficulties in detecting regime changes.  

Regarding the F1-score, RCBNB-MB maintains strong performance, achieving 0.71 in both the \textit{2 regimes} and \textit{3 regimes} scenarios. RCBNB-PA demonstrates comparable performance and even outperforms RCBNB-MB in the \textit{2 regimes} scenario with an F1-score of 0.72. PCMCI-based methods show moderate performance, with RPCMCI-PA achieving scores around 0.58. 
The performance gap between the PCMCI-based methods is notably smaller than that observed between the PCMCI$^+$-based methods. CASTOR and RDynotears-based methods exhibit the lowest F1-scores, highlighting their limitations in reconstructing causal graphs accurately. Similar to the case where the time series length is 600, methods incorporating the Markov Blanket perform at least as well as those relying solely on parents.

The nSHD results show a similar trend. RCBNB-MB achieves the lowest nSHD in both scenarios, with 0.48 for \textit{2 regimes} and 0.55 for \textit{3 regimes}, followed closely by RCBNB-PA. This indicates that RCBNB-based methods recover graph structures closest to the ground truth. RPCMCI$^+$-MB also obtains competitive nSHD values, whereas most competing methods have values close to or above 1, reflecting larger structural discrepancies. 

Table~\ref{tab:F1_var_1200} reports the variance of the F1-scores and the normalized Structural Hamming Distance (nSHD). In general, the variances are small, around 0.01. RPCMCI$^+$ is an exception, which exhibits a higher variance, reaching up to 0.09. These findings suggest that while most methods demonstrate stable performance, some, particularly RPCMCI$^+$, exhibit greater variability across repetitions.

\begin{table}[!ht]
    \centering
    \caption{The Mean Error Rate (MER), the F1-score, and the normalized Structural Hamming Distance (nSHD) across two scenarios: \textit{2 regimes} and \textit{3 regimes}. The F1-score is evaluated under three conditions: $Total$ (all edges considered), $Lagged$ (only lagged edges considered), and $Instant$ (only instantaneous edges considered). The reported values are based on 50 repetitions, with each time series having a length of 1,200.}
           	\begin{subtable}[h]{1\textwidth}
            \centering 
            \scriptsize
    \begin{tabular}{c|c|ccc|c|c|ccc|c}
        \multicolumn{10}{c}{\textit{Equal Regime Sizes}} \\
        \hline\hline 
        & \multicolumn{5}{c|}{\textit{2 regimes}} & \multicolumn{5}{c}{\textit{3 regimes}}  \\ 
        & MER & $Total$ & $Lagged$ & $Instant$ & nSHD & MER & $Total$ & $Lagged$ & $Instant$ & nSHD \\ \hline\hline    
        RCBNB-MB & 0.14$\%$ & 0.71  & \textbf{0.76} & 0.61 & \textbf{0.48} & 0.64$\%$ & \textbf{0.71}  & \textbf{0.78} & 0.58 & \textbf{0.55}	 \\
        RCBNB-PA & 2.12$\%$ & \textbf{0.72 } & 0.75 & \textbf{0.66} & 0.51 & 1.05$\%$ & \textbf{0.71} & 0.75 & \textbf{0.62} & 0.57 \\ \hline
        RPCMCI-MB & 0.14$\%$ & 0.57  & 0.65 & $\times$  & 1.03 & 0.18$\%$ & 0.56 & 0.64 & $\times$ & 1.07 \\  
        RPCMCI-PA  & 0.14$\%$ & 0.58  & 0.67 & $\times$  & 0.97	 & 0.17$\%$ & 0.57 & 0.65 & $\times$ & 1.04 \\ \hline
        RPCMCI$^+$-MB & 0.15$\%$  & 0.68 & 0.75 & 0.49 & 0.58 & 1.06$\%$ & 0.67 & 0.73 & 0.48 & 0.56 \\ 
        RPCMCI$^+$-PA & 9.11$\%$  & 0.56 & 0.62 & 0.40 & 0.66 & 5.92$\%$ & 0.51 & 0.57 & 0.32 & 0.73  \\ \hline
        RVarLiNGAM-MB & 0.12$\%$ & 0.42 & 0.56 & 0.09 & 0.92 & 0.19$\%$ & 0.44 & 0.56 & 0.12 & 0.93  \\
        RVarLiNGAM-PA & 1.18$\%$ & 0.42 & 0.52 & 0.16 & 0.98 & 2.89$\%$ & 0.43 & 0.53 & 0.18 & 0.96 \\ \hline
        RDynotears-MB& 15.49$\%$ & 0.15 & 0.19 & 0.06 & 1.00 & 15.12$\%$ & 0.14 & 0.17 & 0.06 & 1.02 \\
        RDynotears-PA & 8.73$\%$ & 0.14 & 0.18 & 0.05 & 0.98 & 13.89$\%$ & 0.16 & 0.20 & 0.08 &  1.02 \\
        \hline
        NegControl & $\times$ & 0.30 & 0.33 & 0.23  & 1.34 & $\times$ & 0.28 & 0.32 & 0.21 & 1.37  \\
        CD-NOD & $\times$ & 0.38 & 0.43 & 0.23 & 0.86 & $\times$ & 0.36 & 0.41 & 0.22 & 0.86 \\
        J-PCMCI$^+$ & $\times$ & 0.43 & 0.52 & 0.20 & 1.16 & $\times$ & 0.39 & 0.47 & 0.15 & 1.27 \\
        CASTOR & 23.11$\%$ & 0.12 & 0.15 & 0.06 & 0.99 & 23.46$\%$ & 0.08 & 0.10 & 0.03 & 1.00  \\
        \hline\hline 
    \end{tabular}
    \caption{Mean over 50 repetitions.}
        \label{tab:results_exp_1200}
\end{subtable}
% \end{table}

% \begin{table}[!ht]
%     \centering
%     \caption{Variance of the F1-score across two scenarios: \textit{2 regimes} and \textit{3 regimes}. For each scenario, the F1-score is evaluated under three conditions: $Total$ (all edges considered), $Lagged$ (only lagged edges considered), and $Instant$ (only instantaneous edges considered). Additionally, only oriented edges are considered. The reported values are based on 50 repetitions. Each time series has a length of 1,200, with the disturbance term distributed as $\epsilon_{t}^{j} \sim \mathbf{U}(-0.1, 0.1)$.}
       	\begin{subtable}[h]{1\textwidth}
        \centering
        \scriptsize
    \begin{tabular}{c|ccc|c|ccc|c}
         \multicolumn{9}{c}{\textit{Equal Regime Sizes}} \\   
        \hline\hline 
        & \multicolumn{4}{c|}{\textit{2 regimes}} & \multicolumn{4}{c}{\textit{3 regimes}}  \\ 
        & $Total$ & $Lagged$ & $Instant$ & nSHD & $Total$ & $Lagged$ & $Instant$ & nSHD\\ \hline\hline 
        RCBNB-MB & 0.01  & 0.01 & 0.04 & 0.03 & 0.01  & 0.01 & 0.02 & 0.02  \\
        RCBNB-PA & 0.02 & 0.02 & 0.04 & 0.03 & 0.02 & 0.03 & 0.03 & 0.03 \\ \hline
        RPCMCI-MB & <0.01  & <0.01 & $\times$ & 0.03 & <0.01 & <0.01 & $\times$ & 0.02	  \\  
        RPCMCI-PA  & <0.01  & <0.01 & $\times$ & 0.04 & <0.01 & <0.01 & $\times$ & 0.02	\\ \hline
        RPCMCI$^+$-MB & 0.01  & 0.01 & 0.03 & 0.03 & 0.02 & 0.03 & 0.02 & 0.02	  \\ 
        RPCMCI$^+$-PA & 0.07  & 0.08 & 0.05 & 0.05 & 0.08 & 0.09 & 0.04 & 0.03 \\ \hline
        RVarLiNGAM-MB & <0.01 & <0.01 & 0.01 & 0.01 & <0.01 & <0.01 & 0.01 & 0.01 \\
        RVarLiNGAM-PA & <0.01 & 0.01 & 0.01 & 0.01 & <0.01 & <0.01 & 0.01 & 0.01 \\ \hline
        RDynotears-MB & 0.01 & 0.02 & 0.01 & <0.01 & 0.01 & 0.02 & 0.01 & 0.01 \\
        RDynotears-PA & 0.01 & 0.01 & 0.01 & <0.01 & 0.01 & 0.01 & 0.01 & 0.01 \\
        \hline 
        NegControl  & <0.01 & 0.01 & 0.01  & 0.02 & <0.01 & <0.01 & 0.01 & 0.01	 \\
        CD-NOD  & 0.01 & 0.01 & 0.02 & 0.01 & 0.01  & 0.01 & 0.01 & 0.01  \\
        J-PCMCI$^+$  & <0.01 & <0.01 & 0.01 & 0.02 & <0.01 & <0.01 & <0.01 & 0.02 \\
        CASTOR & 0.01 & 0.01 & 0.01  & <0.01 & <0.01 & 0.01 & <0.01 & <0.01 \\
        \hline\hline 
    \end{tabular}   
    \caption{Variance over 50 repetitions.}
    \label{tab:F1_var_1200}
\end{subtable}
\end{table}

\clearpage
\subsection{Variance of the F1-score and the normalized Structural Hamming Distance (nSHD)for Simulated Data with a Length of 600}
\label{appendix:var_f1_600}
Table~\ref{tab:F1_var_600} presents the variances of the F1-scores for simulated data with a length of 600. The variances are generally small, around 0.01, except for RCBNB-PA, where they range between 0.02 and 0.04, and RPCMCI$^+$, which exhibits higher variance, reaching up to 0.09 for PA and 0.05 for MB. These results suggest that while most methods yield stable performance, some, particularly RPCMCI$^+$, show more variability across repetitions.

The nSHD variances are also generally small, indicating stable structural performance across repetitions. For RCBNB-based methods, the variance remains low, ranging from 0.01 to 0.04. Most competing methods show similarly limited variability, although RPCMCI-based methods in the \textit{2 regimes} scenario exhibit slightly higher nSHD variance, reaching 0.06 for MB and 0.07 for PA. 

\begin{table}[!ht]
    \centering
    \scriptsize
    \caption{Variance of the F1-score and the normalized Structural Hamming Distance (nSHD) across two scenarios: \textit{2 regimes} and \textit{3 regimes}. For each scenario, the F1-score is evaluated under three conditions: $Total$ (all edges considered), $Lagged$ (only lagged edges considered), and $Instant$ (only instantaneous edges considered). Additionally, only oriented edges are considered. The reported values are based on 50 repetitions. Each time series has a length of 600, with the disturbance term distributed as $\epsilon_{t}^{j} \sim \mathbf{U}(-0.1, 0.1)$.}
    \label{tab:F1_var_600}
    \begin{tabular}{c|ccc|c|ccc|c}
        \multicolumn{7}{c}{\textit{Equal Regime Sizes}} \\
        \hline\hline 
        & \multicolumn{4}{c|}{\textit{2 regimes}} & \multicolumn{4}{c}{\textit{3 regimes}}  \\ 
        & $Total$ & $Lagged$ & $Instant$ & nSHD & $Total$ & $Lagged$ & $Instant$ & nSHD \\ \hline\hline    
        RCBNB-MB & 0.01  & 0.01 & 0.02 & 0.04 & 0.01 & 0.01 & 0.03 & 0.01 \\
        RCBNB-PA & 0.03 & 0.03 & 0.04 & 0.04 & 0.02 & 0.02 & 0.03 & 0.04 \\ \hline
        RPCMCI-MB & <0.01  & 0.01 & $\times$ & 0.06	 & <0.01 & <0.01 & $\times$ & 0.02 \\  
        RPCMCI-PA  & 0.01  & 0.01 & $\times$ & 0.07 & <0.01 & <0.01 & $\times$ & 0.02 \\ \hline
        RPCMCI$^+$-MB & 0.02  & 0.02 & 0.03 & 0.04 & 0.04 & 0.05 & 0.03 & 0.02 \\ 
        RPCMCI$^+$-PA & 0.07 & 0.08 & 0.05 & 0.05 & 0.07 & 0.09 & 0.03 & 0.03 \\ \hline
        RVarLiNGAM-MB & <0.01 & 0.01 & 0.01 & 0.01 & <0.01 & <0.01 & 0.01 & 0.01 \\
        RVarLiNGAM-PA & <0.01 & <0.01 & 0.02 & 0.02	& <0.01 & <0.01 & 0.01 & 0.01\\ \hline
        RDynotears-MB & 0.01 & 0.02 & 0.01 & 0.02 & 0.01 & 0.01 & 0.01 & 0.02 \\
        RDynotears-PA & 0.01 & 0.02 & 0.01 & 0.01 & 0.01 & 0.01 & 0.01 & 0.02 \\
        \hline
        NegControl & <0.01 & 0.01 & 0.01 & 0.02 & <0.01 & <0.01 & 0.01 & 0.01 \\
        CD-NOD & 0.02 & 0.03 & 0.02 & 0.02 & 0.01 & 0.01 & 0.01 & 0.01	 \\
        J-PCMCI$^+$  & 0.01 & 0.01 & 0.01 & 0.03 & <0.01 & 0.01 & 0.01 & 0.03 \\
        CASTOR & 0.01 & 0.01 & 0.01 & <0.01 & <0.01 & 0.01 & <0.01 & <0.01 \\
        \hline\hline 
    \end{tabular}
\end{table}

\clearpage
\subsection{Simulated Data with Unequal Regime Sizes}
\label{appendix:unequal_regime_size}
In this section, we evaluate the methods in a scenario with unequal regime sizes. The data generation process follows Equation~\ref{eq:linear_SCM}, using the same parameter distributions as in Section~\ref{subsec:simulated_data}. The simulated system contains six variables with linear relationships. The main difference from the previous setting is that the size of each regime is randomly determined, subject to the constraint that each regime occupies at least 10\% of the time series and remains contiguous ($i.e.$, without interruptions). The length of the time series varies between 600 and 1,200, and the number of regimes ranges from 2 to 3.

Table~\ref{tab:results_exp_l_unequal_600} reports the Mean Error Rate (MER) and the mean F1-score for the \textit{2 regimes} and \textit{3 regimes} scenarios when the time series length is 600. In terms of MER, when the number of regimes is two, RCBNB-MB achieves a MER of 0.61\%, which remains competitive with RPCMCI-MB (0.48\%), RPCMCI-PA (0.49\%), and RVarLiNGAM-MB (0.57\%). When the number of regimes increases to three, these methods continue to maintain relatively low MER values. Compared with the equal regime size scenario, MER generally increases across most methods, reflecting the additional difficulty of detecting regime changes when regime sizes vary. Nevertheless, RPCMCI-MB and RPCMCI-PA achieve slightly lower MER values when the number of regimes is two, indicating strong performance in regime assignment. In contrast, CASTOR and RDynotears-based methods still show relatively poor performance on this task.

Regarding the F1-score, RCBNB-MB achieves the best performance in both the \textit{2 regimes} and \textit{3 regimes} scenarios with total F1-scores of 0.63 and 0.59, respectively. PCMCI-based and PCMCI$^+$-based methods follow closely behind. As expected, the F1-scores slightly decrease in the \textit{3 regimes} scenario due to the increased complexity of the task. Compared with the equal regime size setting, the F1-score of RCBNB-MB decreases slightly but still remains the highest among the evaluated methods, demonstrating robustness in causal graph reconstruction even when regime sizes are unequal. Overall, methods using the Markov blanket generally outperform those relying only on parent sets, while J-PCMCI$^+$ and CD-NOD show performance only slightly better than the random baseline NegControl.

The nSHD results show a similar trend. For time series of length 600, RCBNB-MB achieves the lowest nSHD in both scenarios, with values of 0.61 for \textit{2 regimes} and 0.67 for \textit{3 regimes}. RCBNB-PA and RPCMCI$^+$-MB also obtain relatively competitive nSHD values, while most competing methods have nSHD values close to or above 1. This indicates that RCBNB-MB remains the most accurate method in terms of structural recovery under unequal regime sizes.

Table~\ref{tab:results_exp_l_inequal_1200} presents the results when the time series length is 1,200. The MER generally decreases compared to the case where the time series length is 600, suggesting that longer time series provide more information for accurate regime assignment. Although the MER of RCBNB-MB increases slightly to 0.63\%, it remains very low. The corresponding F1-scores remain competitive, reaching 0.66 in the \textit{2 regimes} scenario and 0.65 in the \textit{3 regimes} scenario. Similar trends are observed across methods: PCMCI-based and PCMCI$^+$-based approaches achieve moderate performance, with RPCMCI$^+$-MB also reaching F1-scores of 0.67 in the \textit{2 regimes} scenario and 0.65 in the \textit{3 regimes} scenario. In contrast, CASTOR, NegControl, CD-NOD, J-PCMCI$^+$, VarLiNGAM-based, and Dynotears-based methods show weaker performance.

For time series of length 1,200, the nSHD results further confirm the advantage of RCBNB-based methods. RCBNB-MB obtains the lowest nSHD in the \textit{3 regimes} scenario, with a value of 0.58, and is tied with RPCMCI$^+$-MB in the \textit{2 regimes} scenario, with a value of 0.56. Compared with the length-600 setting, the nSHD of RCBNB-MB decreases, suggesting that longer time series help improve the accuracy of graph reconstruction.

Finally, Table~\ref{tab:var_f1_l_unequal_600} and Table~\ref{tab:var_f1_l_unequal_1200} present the variances of the F1-scores for time series lengths of 600 and 1,200. Overall, the variances remain small, typically around 0.01, indicating stable performance across repetitions. Consistent with the equal regime size scenario, RPCMCI$^+$-based methods show higher variance, particularly for the PA variant, where the variance reaches approximately 0.11. In contrast, RCBNB-MB maintains low variance across all settings, further demonstrating its stability when regime sizes are unequal.

The nSHD variances are also generally small across both time series lengths. For RCBNB-MB, the nSHD variance remains between 0.02 and 0.03, indicating stable structural recovery across repetitions. Although RPCMCI$^+$-based methods show higher variance in terms of F1-score, their nSHD variance remains moderate, reaching at most 0.05. Overall, these results suggest that the structural differences observed in the mean nSHD values are consistent across repetitions.

\begin{table}[!h]
    \centering
    \caption{{\bf Unequal Regime Sizes.} The Mean Error Rate (MER), the F1-score, and the normalized Structural Hamming Distance (nSHD) across two scenarios: \textit{2 regimes} and \textit{3 regimes}. The F1-score is evaluated under three conditions: $Total$ (all edges considered), $Lagged$ (only lagged edges considered), and $Instant$ (only instantaneous edges considered). The reported values are based on 50 repetitions. The data generation process follows Equation~\ref{eq:linear_SCM}. The only difference compared to Section~\ref{subsec:simulated_data} is that the regime sizes are randomly determined, with the constraint that each regime occupies at least 10$\%$ of the time series and remains contiguous (i.e., without interruptions).}
    	\begin{subtable}[h]{1\textwidth}
        \centering
        \scriptsize
    \begin{tabular}{c|c|ccc|c|c|ccc|c}
 %         \multicolumn{9}{c}{\textit{Unequal Regime Sizes}} \\
          \hline\hline 
        & \multicolumn{5}{c|}{\textit{2 regimes}} & \multicolumn{5}{c}{\textit{3 regimes}}  \\ 
        & MER & $Total$ & $Lagged$ & $Instant$ & nSHD & MER & $Total$ & $Lagged$ & $Instant$ & nSHD \\ \hline\hline    
        RCBNB-MB & 0.61$\%$ & \textbf{0.63} & \textbf{0.70} & \textbf{0.49} & \textbf{0.61} & 3.05$\%$ & \textbf{0.59} & \textbf{0.68} & \textbf{0.41} & \textbf{0.67} \\
        RCBNB-PA & 4.57$\%$ & 0.60 & 0.69 & 0.42 & 0.66 & 7.00$\%$ & 0.54 & 0.64 & 0.33 & 0.77 \\ \hline
        RPCMCI-MB & 0.48$\%$ & 0.57 & 0.65 & $\times$  & 1.03 & 1.47$\%$ & 0.53 & 0.61 & $\times$ & 1.13 \\  
        RPCMCI-PA & 0.49$\%$ & 0.58 & 0.66 & $\times$  & 0.99 & 1.42$\%$ & 0.53 & 0.61 & $\times$ & 1.11 \\  \hline
        RPCMCI$^+$-MB & 3.07$\%$ & 0.59 & 0.66 & 0.40 & 0.67 & 4.85$\%$ & 0.52 & 0.59 & 0.34 & 0.72	\\ 
        RPCMCI$^+$-PA & 15.52$\%$ & 0.46 & 0.53 & 0.29 & 0.75 & 18.42$\%$ & 0.32 & 0.37 & 0.17 & 0.85 \\ \hline
        RVarLiNGAM-MB & 0.57$\%$ & 0.42 & 0.55 & 0.11 & 0.96 & 1.32$\%$ & 0.42 & 0.54 & 0.11 & 1.01	\\
        RVarLiNGAM-PA & 3.05$\%$ & 0.42 & 0.52 & 0.17 & 0.97 & 6.31$\%$ & 0.41 & 0.50 & 0.19 & 1.05 \\ \hline
        RDynotears-MB & 21.72$\%$ & 0.20 & 0.24 & 0.10 & 1.02 & 28.18$\%$ & 0.21 & 0.27 & 0.08 & 1.11 \\
        RDynotears-PA & 15.49$\%$ & 0.20 & 0.25 & 0.09 & 1.01 & 23.29$\%$ & 0.21 & 0.27 & 0.09 & 1.03 \\ \hline
        NegControl & $\times$ & 0.30 & 0.33 & 0.22 & 1.34 & $\times$ & 0.28 & 0.31 & 0.21 & 1.39 \\
        CD-NOD & $\times$ & 0.33 & 0.37 & 0.22 & 0.88 & $\times$ & 0.31 & 0.35 & 0.19 & 0.91 \\
        J-PCMCI$^+$ & $\times$ & 0.41 & 0.49 & 0.17 & 1.14 & $\times$ & 0.38 & 0.46 & 0.16 & 1.22 \\
        CASTOR & 26.63$\%$ & 0.14 & 0.16 & 0.07 & 0.99 & 23.10$\%$ & 0.12 & 0.15 & 0.05 & 0.99 \\
        \hline\hline 
    \end{tabular}
    \caption{Each time series has a length of 600.}
        \label{tab:results_exp_l_unequal_600}
    \end{subtable}
%\end{table}
%
\hfill
%\begin{table}[!h]
%    \centering
 %   \caption{The Mean Error Rate (MER) and the mean F1-score across two scenarios: \textit{2 regimes} and \textit{3 regimes}. The F1-score is evaluated under three conditions: $Total$ (all edges considered), $Lagged$ (only lagged edges considered), and $Instant$ (only instantaneous edges considered). The reported values are based on 50 repetitions, with each time series having a length of 1,200. The data generation process follows Equation~\ref{eq:linear_SCM}. The only difference compared to Section~\ref{subsec:simulated_data} is that the regime sizes are randomly determined, with the constraint that each regime occupies at least 10$\%$ of the time series and remains contiguous (i.e., without interruptions).}
 %   \renewcommand{\arraystretch}{1.2} 
    	\begin{subtable}[h]{1\textwidth}
        \centering
        \scriptsize
    \begin{tabular}{c|c|ccc|c|c|ccc|c}
    %      \multicolumn{9}{c}{\textit{Unequal Regime Sizes}} \\
          \hline\hline 
        & \multicolumn{5}{c|}{\textit{2 regimes}} & \multicolumn{5}{c}{\textit{3 regimes}}  \\ 
        & MER & $Total$ & $Lagged$ & $Instant$ & nSHD & MER & $Total$ & $Lagged$ & $Instant$ & nSHD \\ \hline\hline    
        RCBNB-MB & 0.63$\%$ & 0.66 & \textbf{0.76} & 0.44 & \textbf{0.56} & 0.74$\%$ & \textbf{0.65} & \textbf{0.75} & 0.44 & \textbf{0.58} \\
        RCBNB-PA & 3.00$\%$ & 0.63 & 0.72 & 0.44 & 0.63 & 3.23$\%$ & 0.62 & 0.72 & 0.42 & 0.65 \\ \hline
        RPCMCI-MB & 0.28$\%$ & 0.57 & 0.65 & $\times$ & 1.02 & 0.67$\%$ & 0.56 & 0.64 & $\times$ & 1.08 \\ 
        RPCMCI-PA & 0.26$\%$ & 0.58 & 0.66 & $\times$ & 0.99 & 0.63$\%$ & 0.57 & 0.65  & $\times$ & 1.05 \\ \hline
        RPCMCI$^+$-MB & 0.71$\%$ & \textbf{0.67} & 0.73 & \textbf{0.51} & \textbf{0.56} & 0.78$\%$ & \textbf{0.65} & 0.72 & \textbf{0.45} & 0.62 \\ 
        RPCMCI$^+$-PA & 8.32$\%$ & 0.58 & 0.63 & 0.40 & 0.66 & 9.09$\%$ & 0.53 & 0.59 & 0.34 & 0.71	\\ \hline
        RVarLiNGAM-MB & 0.25$\%$ & 0.42 & 0.55 & 0.10 & 0.93 & 0.69$\%$ & 0.43 & 0.56 & 0.11 & 0.94 \\
        RVarLiNGAM-PA & 0.47$\%$ & 0.43 & 0.55 & 0.14 & 0.93 & 3.58$\%$ & 0.42 & 0.52 & 0.15 & 0.98 \\ \hline
        RDynotears-MB & 19.83$\%$ & 0.14 & 0.17 & 0.07 & 1.00 & 33.82$\%$ & 0.17 & 0.21 & 0.08 & 1.03 \\
        RDynotears-PA & 20.40$\%$ & 0.14 & 0.18 & 0.06 & 1.00 & 30.83$\%$ & 0.17 & 0.22 & 0.07 & 1.04 \\ \hline
        NegControl & $\times$  & 0.30 & 0.33 & 0.24 & 1.34 & $\times$  & 0.29 & 0.33 & 0.22 & 1.36 \\
        CD-NOD & $\times$ & 0.34 & 0.38 & 0.21 & 0.88 & $\times$ & 0.35 & 0.40 & 0.23 & 0.86 \\
        J-PCMCI$^+$ & $\times$ & 0.42 & 0.51 & 0.19 & 1.14 & $\times$ & 0.39 & 0.47 & 0.17 & 1.26 \\
        CASTOR & 33.86$\%$ & 0.11 & 0.13 & 0.05 & 1.00 & 28.35$\%$ & 0.08 & 0.10 & 0.03 & 1.00 \\
        \hline\hline 
    \end{tabular}
    \caption{Each time series has a length of 1200.}
        \label{tab:results_exp_l_inequal_1200}
    \end{subtable}
\end{table}

\begin{table}[!h]
    \centering
    \caption{{\bf Unequal regime sizes.} Variance of the F1-score and the normalized Structural Hamming Distance (nSHD) across two scenarios: \textit{2 regimes} and \textit{3 regimes}. The variance is evaluated under three conditions: $Total$ (all edges considered), $Lagged$ (only lagged edges considered), and $Instant$ (only instantaneous edges considered). The reported values are based on 50 repetitions. The data generation process follows Equation~\ref{eq:linear_SCM}. The only difference compared to Section~\ref{subsec:simulated_data} is that the regime sizes are randomly determined, with the constraint that each regime occupies at least 10$\%$ of the time series and remains contiguous (i.e., without interruptions).}
        	\begin{subtable}[h]{1\textwidth}
            \centering
            \scriptsize
    \begin{tabular}{c|ccc|c|ccc|c}
%        \multicolumn{7}{c}{\textit{Unequal Regime Sizes}} \\
        \hline\hline 
        & \multicolumn{4}{c|}{\textit{2 regimes}} & \multicolumn{4}{c}{\textit{3 regimes}}  \\ 
        & $Total$ & $Lagged$ & $Instant$ & nSHD & $Total$ & $Lagged$ & $Instant$ & nSHD \\ \hline\hline    
        RCBNB-MB & 0.01 & 0.01 & 0.02 & 0.02 & 0.01 & 0.01 & 0.02 & 0.03 \\
        RCBNB-PA & 0.02 & 0.02 & 0.03 & 0.05 & 0.01 & 0.02 & 0.02 & 0.03 \\ \hline
        RPCMCI-MB & <0.01 & <0.01 & $\times$ & 0.04 & <0.01 & <0.01 & $\times$ & 0.02 \\  
        RPCMCI-PA & <0.01 & <0.01 & $\times$ & 0.04	& <0.01 & <0.01 & $\times$ & 0.03 \\ \hline
        RPCMCI$^+$-MB & 0.02 & 0.03 & 0.03  & 0.03 & 0.05 & 0.06 & 0.04 & 0.03 \\ 
        RPCMCI$^+$-PA & 0.07 & 0.09 & 0.05  & 0.05 & 0.08 & 0.11 & 0.04 & 0.03 \\ \hline
        RVarLiNGAM-MB & <0.01 & 0.01 & 0.01 & 0.02 & <0.01 & <0.01 & <0.01 & 0.02 \\
        RVarLiNGAM-PA & 0.01 & 0.01 & 0.02 & 0.02 & <0.01 & 0.01 & 0.01 & 0.02 \\ \hline
        RDynotears-MB & 0.01 & 0.02 & 0.01 & 0.02 & 0.01 & 0.01 & <0.01 & 0.03 \\
        RDynotears-PA & 0.01 & 0.01 & 0.01 & 0.02 & 0.01 & 0.01 & 0.01 & 0.01 \\
        \hline
        NegControl & <0.01 & 0.01 & 0.01 & 0.01 & <0.01 & <0.01 & 0.01 & 0.01 \\
        CD-NOD & 0.01 & 0.01 & 0.02 & 0.01 & 0.01 & 0.01 & 0.01 & 0.01	\\
        J-PCMCI$^+$ & 0.01 & 0.01 & 0.01 & 0.02 & <0.01 & 0.01 & 0.01 & 0.02 \\
        CASTOR & 0.01 & 0.01 & 0.01 & <0.01 & 0.01 & 0.01 & <0.01 & <0.01 \\
        \hline\hline 
    \end{tabular}
    \caption{Each time series has a length of 600.}
        \label{tab:var_f1_l_unequal_600}
\end{subtable}
% \end{table}

% \begin{table}[!h]
%     \centering
%    \caption{Variance of the F1-score across two scenarios: \textit{2 regimes} and \textit{3 regimes}. The variance is evaluated under three conditions: $Total$ (all edges considered), $Lagged$ (only lagged edges considered), and $Instant$ (only instantaneous edges considered). The reported values are based on 50 repetitions, with each time series having a length of 1,200. The data generation process follows Equation~\ref{eq:linear_SCM}. The only difference compared to Section~\ref{subsec:simulated_data} is that the regime sizes are randomly determined, with the constraint that each regime occupies at least 10$\%$ of the time series and remains contiguous (i.e., without interruptions).}
\begin{subtable}[h]{1\textwidth}
    \centering
    \scriptsize
    \begin{tabular}{c|ccc|c|ccc|c}
    %    \multicolumn{7}{c}{\textit{Unequal Regime Sizes}} \\
        \hline\hline 
        & \multicolumn{4}{c|}{\textit{2 regimes}} & \multicolumn{4}{c}{\textit{3 regimes}}  \\ 
        & $Total$ & $Lagged$ & $Instant$ & nSHD & $Total$ & $Lagged$ & $Instant$ & nSHD \\ \hline\hline    
        RCBNB-MB & 0.01 & 0.01 & 0.03 & 0.03 & 0.01 & 0.01 & 0.02 & 0.02 \\
        RCBNB-PA & 0.01 & 0.01 & 0.02 & 0.04 & 0.01 & 0.01 & 0.02 & 0.03 \\ \hline
        RPCMCI-MB & <0.01 & <0.01 & $\times$ & 0.03	& <0.01 & <0.01 & $\times$ & 0.03 \\  
        RPCMCI-PA & <0.01 & <0.01 & $\times$ & 0.04 & <0.01 & <0.01 & $\times$ & 0.03 \\ \hline
        RPCMCI$^+$-MB & 0.02 & 0.02 & 0.03 & 0.02 & 0.01 & 0.01 & 0.02 & 0.02 \\ 
        RPCMCI$^+$-PA & 0.06 & 0.06 & 0.05 & 0.05 & 0.06 & 0.07 & 0.04 & 0.03 \\ \hline
        RVarLiNGAM-MB & <0.01 & 0.01 & 0.01 & 0.01 & <0.01 & <0.01 & 0.01 & 0.01 \\
        RVarLiNGAM-PA & <0.01 & 0.01 & 0.01 & 0.01 & 0.01 & 0.01 & 0.01 & 0.01 \\ \hline
        RDynotears-MB & 0.01 & 0.02 & 0.01 & 0.01 & 0.01 & 0.01 & 0.01 & 0.01 \\
        RDynotears-PA & 0.01 & 0.02 & 0.01 & 0.01 & 0.01 & 0.01 & 0.01 & 0.02 \\
        \hline
        NegControl & <0.01 & 0.01 & 0.01 & 0.02 & <0.01 & <0.01 & 0.01 & 0.01 \\
        CD-NOD & 0.01 & 0.02 & 0.03 & 0.01 & 0.01 & 0.01 & 0.01 & 0.01 \\
        J-PCMCI$^+$ & <0.01 & 0.01 & 0.01 & 0.02 & <0.01 & 0.01 & 0.01 & 0.02 \\
        CASTOR & 0.01 & 0.01 & 0.01 & <0.01 & 0.01 & 0.01 & <0.01 & <0.01 \\
        \hline\hline 
    \end{tabular}
    \caption{Each time series has a length of 1200.}
       \label{tab:var_f1_l_unequal_1200}
\end{subtable}
\end{table}

\clearpage
\subsection{Simulated Data with 15 Variables}
\label{appendix:15_variables}
In this section, we evaluate the methods in a scenario with a larger number of variables. The data generation process follows Equation~\ref{eq:linear_SCM}, using the same parameter distributions as in Section~\ref{subsec:simulated_data}, with 15 variables and a time series of length 2,000 containing 2 contiguous regimes of unequal sizes, each occupying at least 10\% of the time series. In this setting, we do not impose the constraint on the number of shared edges, since two random graphs with this density necessarily share a larger number of edges.

Table~\ref{tab:results_exp_15var} reports the results. In terms of MER, RCBNB-MB (0.07\%), RPCMCI-MB (0.04\%), RPCMCI-PA (0.19\%), and RVarLiNGAM-MB (0.10\%) assign almost all timestamps to the correct regime, whereas RPCMCI$^+$-PA (39.16\%) and CASTOR (54.44\%) struggle considerably. Within each family of methods, the variant using the Markov blanket attains an error lower than or equal to that of the variant using only parents, confirming that the Markov blanket provides a more robust representation for regime assignment in higher dimensions. Regarding the F1 score, RCBNB-MB achieves the best performance under all three conditions (0.49 in total, 0.56 for lagged edges, and 0.45 for instantaneous edges), followed by RPCMCI-MB and RPCMCI-PA (0.43) and RPCMCI$^+$-MB (0.34). The absolute F1 levels are lower than in the setting with six variables, reflecting the increased difficulty of the task. Strikingly, the methods relying on VarLiNGAM, Dynotears, CASTOR, and CD-NOD fall below the random baseline NegControl (0.25), while J-PCMCI$^+$ (0.27) remains only marginally above it, showing that increasing the dimensionality is far more damaging to these baselines than to RCBNB-MB.

The nSHD results further confirm this trend. RCBNB-MB achieves the lowest nSHD, with a value of 0.77, followed closely by RCBNB-PA with 0.79 and RPCMCI$^+$-MB with 0.83. This indicates that RCBNB-based methods recover graph structures closest to the ground truth even in the higher-dimensional setting. In contrast, most competing methods obtain nSHD values close to or above 1, reflecting larger structural discrepancies. 

Finally, Table~\ref{tab:results_var_15var} shows that the variances of the F1 scores remain small, typically below 0.01. Consistent with the other scenarios, the methods based on RPCMCI$^+$ exhibit the highest variability (up to 0.03), while RCBNB-MB maintains a variance of at most 0.01 under all conditions, further demonstrating its stability as the dimensionality increases.

The nSHD variances are also very small, with most values below or around 0.01. In particular, RCBNB-MB has an nSHD variance below 0.01, showing that its structural recovery remains stable across repetitions. However, the low variance of weaker baselines should be interpreted together with their mean nSHD values, since it may also indicate consistently poor structural recovery rather than accurate reconstruction.

\begin{table}[!ht]
    \centering
    \scriptsize
    \caption{The Mean Error Rate (MER), the F1-score and the normalized Structural Hamming Distance (nSHD) in the \textit{2 regimes} scenario with 15 variables and unequal regime sizes. The F1-score is evaluated under three conditions: $Total$ (all edges considered), $Lagged$ (only lagged edges considered), and $Instant$ (only instantaneous edges considered). The reported values are based on 50 repetitions, with each time series having a length of 2,000.}
           	\begin{subtable}[h]{1\textwidth}
            \centering
    \begin{tabular}{c|c|ccc|c}
        \multicolumn{6}{c}{\textit{Unequal Regime Sizes, 15 Variables}} \\
        \hline\hline
        & \multicolumn{5}{c}{\textit{2 regimes}} \\
        & MER & $Total$ & $Lagged$ & $Instant$ & nSHD \\ \hline\hline
        RCBNB-MB & 0.07$\%$ & \textbf{0.49} & \textbf{0.56} & \textbf{0.45} & \textbf{0.77} \\
        RCBNB-PA & 3.08$\%$ & 0.42 & 0.40 & 0.44 & 0.79 \\ \hline
        RPCMCI-MB & 0.04$\%$ & 0.43 & 0.41 & $\times$ & 1.19 \\
        RPCMCI-PA & 0.19$\%$ & 0.43 & 0.41 & $\times$ & 1.20 \\ \hline
        RPCMCI$^+$-MB & 9.99$\%$ & 0.34 & 0.35 & 0.30 & 0.83 \\
        RPCMCI$^+$-PA & 39.16$\%$ & 0.21 & 0.23 & 0.15 & 0.94 \\ \hline
        RVarLiNGAM-MB & 0.10$\%$ & 0.02 & 0.03 & 0.01 & 1.01 \\
        RVarLiNGAM-PA & 7.55$\%$ & 0.05 & 0.05 & 0.06 & 1.03 \\ \hline
        RDynotears-MB & 7.39$\%$ & 0.06 & 0.07 & 0.04 & 1.03 \\
        RDynotears-PA & 9.95$\%$ & 0.06 & 0.07 & 0.04 & 1.03 \\
        \hline
        NegControl & $\times$ & 0.25 & 0.29 & 0.19 & 1.44 \\
        CD-NOD & $\times$ & 0.16 & 0.13 & 0.21 & 0.96 \\
        J-PCMCI$^+$ & $\times$ & 0.27 & 0.33 & 0.08 & 1.06 \\
        CASTOR & 54.44$\%$ & 0.05 & 0.06 & 0.04 & 1.02 \\
        \hline\hline
    \end{tabular}
    \caption{Mean over 50 repetitions.}
        \label{tab:results_exp_15var}
\end{subtable}

       	\begin{subtable}[h]{1\textwidth}
        \centering
        \scriptsize
    \begin{tabular}{c|ccc|c}
         \multicolumn{5}{c}{\textit{Unequal Regime Sizes, 15 Variables}} \\
        \hline\hline
        & \multicolumn{4}{c}{\textit{2 regimes}} \\
        & $Total$ & $Lagged$ & $Instant$ & nSHD\\ \hline\hline
        RCBNB-MB & <0.01 & 0.01 & 0.01 & <0.01 \\
        RCBNB-PA & <0.01 & 0.01 & 0.01 & 0.01 \\ \hline
        RPCMCI-MB & <0.01 & <0.01 & $\times$ & 0.01 \\
        RPCMCI-PA & <0.01 & <0.01 & $\times$ & 0.01 \\ \hline
        RPCMCI$^+$-MB & 0.02 & 0.02 & 0.02 & 0.01 \\
        RPCMCI$^+$-PA & 0.03 & 0.03 & 0.02 & 0.01 \\ \hline
        RVarLiNGAM-MB & <0.01 & <0.01 & <0.01 & <0.01 \\
        RVarLiNGAM-PA & <0.01 & <0.01 & <0.01 & <0.01 \\ \hline
        RDynotears-MB & <0.01 & <0.01 & <0.01 & <0.01 \\
        RDynotears-PA & <0.01 & <0.01 & <0.01 & <0.01 \\
        \hline
        NegControl & <0.01 & <0.01 & <0.01 & <0.01 \\
        CD-NOD & <0.01 & <0.01 & 0.01 & <0.01 \\
        J-PCMCI$^+$ & <0.01 & <0.01 & <0.01 & <0.01 \\
        CASTOR & <0.01 & <0.01 & <0.01 & <0.01 \\
        \hline\hline
    \end{tabular}
    \caption{Variance over 50 repetitions.}
    \label{tab:results_var_15var}
\end{subtable}
\end{table}

\clearpage
\section{Real data: some details} \label{app:realdata}
We analyze eight time series collected from an IT monitoring system with a one-minute sampling rate, provided by EasyVista. These time series capture various activities within the system. PMDB represents the extraction of some information about the messages received by the Storm ingestion system; MDB refers to an activity of a process that orients messages to other processes with respect to different types of messages; CMB represents the activity of extraction of metrics from messages; MB represents the activity of insertion of data in a database; LMB reflects the updates of the last values of metrics in Cassandra; RTMB represents the activity of searching to merge data with information coming from the check message bolt; GSIB represents the activity of insertion of historical status in database; ESB represents the activity of writing data in Elasticsearch.

\section{Source Code Information}
\label{appendix:source_code}
All methods used in Section~\ref{sec:exper} were implemented using publicly available Python libraries:

\begin{itemize}
    \item \textbf{CBNB}: \url{https://github.com/ckassaad/Hybrids_of_CB_and_NB_for_Time_Series}
    \item \textbf{VarLiNGAM / CD-NOD} (Causal-learn package): \url{https://github.com/py-why/causal-learn/tree/main}
    \item \textbf{Dynotears} (CausalNex package): \url{https://github.com/mckinsey/causalnex}
    \item \textbf{J-PCMCI$^+$ / PCMCI / PCMCI$^+$} (Tigramite package): 
          \url{https://github.com/jakobrunge/tigramite/}
    \item \textbf{NegControl}: \url{https://github.com/annennenne/negcontrol-disco}
    \item \textbf{CASTOR}: \url{https://github.com/arahmani1/CASTOR}
\end{itemize}